\newtheorem{theorem}{Theorem}[section]
\newtheorem{lemma}[theorem]{Lemma}
\newtheorem{corollary}[theorem]{Corollary}
\theoremstyle{definition}
\newtheorem{example}[theorem]{Example}
\numberwithin{equation}{section}
\numberwithin{equation}{section}
\def\0{{\bar{0}}}
\def\R{{\mathbb{R}}}
\def\Z{{\mathbb{Z}}}
\begin{document}

\title[]{Stochastic Markov gradient descent and training low-bit neural networks}

 \author{Jonathan Ashbrock}
\address{Department of Mathematics, Vanderbilt University, Nashville, TN 37240, USA}
\email{jonathan.ashbrock@vanderbilt.edu}

\author{Alexander M. Powell}
\address{Department of Mathematics, Vanderbilt University, Nashville, TN 37240, USA}
\email{alexander.m.powell@vanderbilt.edu}

\subjclass[2010]{Primary 65K99, 41A99; Secondary 68P30}


\
\keywords{Neural Networks, Quantization, Stochastic Gradient Descent, Stochastic Markov Grdient Descent, Low-Memory Training}

\begin{abstract}
The massive size of modern neural networks has motivated substantial recent interest in neural network quantization. We introduce Stochastic Markov Gradient Descent (SMGD), a discrete optimization method applicable to training quantized neural networks.  
The SMGD algorithm is designed for settings where memory is highly constrained during training.
We provide theoretical guarantees of algorithm performance as well as encouraging numerical results. 
\end{abstract}

\maketitle

\section{Introduction} \label{intro-sec}

Neural networks are a widely used tool for classification and regression tasks, \cite{Lecun,AlexNet}.  Given training data $\{x_j\}_{j=1}^m \subset \R^d$ and a set of labels $\{l_j\}_{j=1}^m \subset \R$,
the general goal is to learn a function $y$ that explains the training set by $$y(x_j) \approx l_j.$$
Neural networks address this by using a specially structured output function $y(x)=y(x,w)$ that is parametrized by a high-dimensional vector $w\in \R^n$ of weights and biases.
%
In a standard feedforward neural network, $y$
is an iterated composition of nonlinear activations and affine maps \cite{NN_Book}.  
More generally, when the training data consists of objects with particular structure, such as images or time series, the output function $y$
may incorporate additional components such as convolutional neurons \cite{Lecun} or feedback \cite{LSTM}.

The universal approximation theorem \cite{Cybenko} and later advances, e.g., \cite{Daubechies2019, zouweishen, Zhou2, Zhou1}, provide a theoretical foundation for neural networks,
and show that weight parameters $w$ can be selected so that the network output $y$ expresses a wide class of input-output relationships.
While neural networks enjoy approximation-theoretic power, 
the large size of the network weight set $w$ creates nontrivial practical challenges during implementation:
\begin{itemize}
\item Nonconvexity of the cost function leads to non-unique minima during training.
\item Slow training times can occur due to the large number of network parameters.
\item Large networks yield slow signal propagation and consequently slow classification.
\item Large amounts of memory are needed to store the network parameters.
\end{itemize}
These computational burdens have motivated the study of {\em quantized neural networks}.  In a standard neural network, the weight parameters $w$ are full-precision floating point numbers.
Instead, quantized neural networks use weight parameters that are intentionally represented using only a small number of bits.  For example, in the extreme case of
 binary neural networks, each weight only contains a single bit of information and so is constrained to take one of only two possible values.  

It has been shown recently, \cite{Courbariaux,DoReFa,xnor}, that quantized neural networks can 
match the state-of-the-art performance obtained by comparably-sized full-precision neural networks.  
Somewhat paradoxically, overparametrization creates computational challenges for implementing neural networks, but it also provides flexibility which
allows heavily quantized, even one-bit, networks to perform well.  Moreover, the use of low-bit neural networks reduces the memory requirements
needed to store network parameters, can be used to speed up signal propagation through networks \cite{xnor}, and can also be viewed as having a regularizing effect.

In this work, we address two aspects of the neural network quantization program.
First, we introduce a method, stochastic Markov gradient descent (SMGD), that produces neural networks that are low-memory during training as well as at run-time.
For comparison, in \cite{JMLRPaper} network weights are quantized at run-time but the method requires storage of full-precision auxiliary weights 
for the parameter update step during training which results in {increased} train-time memory requirements.
Later works, \cite{DoReFa}, produce  smaller run-time memory requirements by quantizing gradients and activations. Moreover, this has the effect of faster training because quantized gradients and activations allow access to bitwise operations during both the forward and the backward pass.  We place particular emphasis on the memory requirements during the training phase since existing quantization methods typically require increased memory requirements during network learning. Our method is the first to our knowledge that allows training of highly-accurate networks while memory is constrained at both train and run-time.

Secondly, the theoretical understanding of 
quantized neural networks 
is still being developed.
The problem of neural network quantization forces one to solve a discrete optimization problem in extremely high dimensions rather than a continuous problem. 
This high dimensionality disallows the use of many standard discrete optimization techniques.
Therefore, existing methods often involve an ad hoc blend of gradient-based methods and discrete optimization techniques.
For example, \cite{Kmeans} uses k-means to cluster similar weights together before quantization. The algorithm in \cite{Courbariaux} quantizes weights during the forward pass while applying the gradient descent update to full-precision, pre-quantized weights. More recent methods \cite{DoReFa,JMLRPaper} generally involve a mild variation on this last idea to achieve goals including quantized gradients, activations, or to apply these ideas to recurrent neural networks.  The work in \cite{Penghang} quantizes neural networks using an approach based
on blended coarse gradient descent.
Towards a more theoretically robust understanding, the work in \cite{LossAware} incorporates quantization error directly into the cost function. 
Our approach is based on a simple probabilistic variation of stochastic gradient descent, and proves theoretical performance guarantees which are highly coincident with their counterparts in traditional stochastic gradient descent. These results give us intuition for how the networks learn and yield evidence for the effectiveness of stochastic Markov gradient descent as a tool for quantizing neural networks. 

The main contributions of this paper are:
\begin{itemize}
\item We introduce stochastic Markov gradient descent (SMGD) for producing neural networks whose weights are fully quantized during {both} training and at run-time, allowing one to learn accurate networks in low-memory environments, see Section \ref{smgd:sec}.
\item We prove theoretical performance guarantees for SMGD in a general setting and draw strong comparisons to comparable results for stochastic gradient descent, see Theorems \ref{first_stoch_theorem} and \ref{iterate_convergence_thm}.
\item We numerically validate the SMGD algorithm and show that it performs well on various benchmark sets for image classification, see Section \ref{numerical_section}
\item We highlight the setting where memory is constrained during training, and show there are instances where networks trained by SMGD can outperform full-precision networks in a bit-for-bit comparison. 
\end{itemize}
The remainder of the paper is organized as follows. Section \ref{background:sec} covers some necessary background information on gradient descent and neural network training. Section \ref{smgd:sec} introduces the SMGD algorithm and gives a brief discussion of intuition for why the algorithm works. 
Section \ref{convergence_section} proves our first main results on the behavior of the cost function $f(x^t)$ under iterations of SMGD, see Theorem \ref{first_stoch_theorem}.
Section \ref{stoch_conv_sec} proves our next main results on rates of convergence for the iterates $x^t$ of SMGD in the special case of strongly convex cost functions $f$, see Theorem \ref{iterate_convergence_thm}.  
Section \ref{non_stoch_sec} collects several corollaries of our main results to illustrate the performance of SMGD in the non-stochastic setting, i.e., when we have access to the gradient itself.
Section \ref{numerical_section} contains numerical results which show that SMGD performs well in various settings.

\section{Background: stochastic gradient descent} \label{background:sec}

Neural network training is the process of using labelled training data $\{(x_j, l_j)\}_{j=1}^m$ to determine a good choice of network parameters $w$.  
Training is typically formulated as a minimization problem
\begin{equation} \label{f-min-prob}
\min_{w \in \R^n} f(w),
\end{equation}
where $f:\R^n \to \R$ is a cost function associated to the network and training data.
In machine learning in particular, $f$ is often of the form
\begin{equation} \label{f-sum-eq}
f(w) = \frac{1}{m} \sum_{i=1}^m f^i(w),
\end{equation}
where $f^i(w)$ measures an error between the label $l_i$ and the network output $y(x_i, w)$ for the $i${th} piece of training data.

The backpropogation algorithm allows efficient computation of $\nabla f^i(w)$, a portion of the gradient of our cost function, which in turn opens the toolbox of gradient-based methods for model selection.
Standard gradient descent is an iterative method which addresses \eqref{f-min-prob} by making updates in the direction of the negative gradient.
However, because of the form \eqref{f-sum-eq}, even if $\nabla f^i$ may be efficiently computed, it may be slow to compute the entirety of $\nabla f$ if $m$ is relatively large.  In practice, $m$ is often extremely large as we have access to larger and larger data sets to learn from.

Given a differentiable function $f:\mathbb{R}^n\rightarrow \mathbb{R}$, we say that a stochastic function $G:\R^n \rightarrow \mathbb{R}$  is an {\em unbiased
	estimator} of $\nabla f$ if $\mathbb{E} [G(x)] = \nabla f(x)$ where the expectation is with respect to the realization of $G$.  
	In the case when $f$ is of the form \eqref{f-sum-eq}, typical examples of unbiased estimators $G$ are:
	\begin{itemize}
	\item {\em Uniform}.  Draw $i$ uniformly at random from $\{1, \cdots, m\}$ and let $G(x) = \nabla f^i(x)$.
	\item {\em Mini-batch estimates}.  Draw $k$ distinct integers $i_1, \cdots, i_{k}$ uniformly at random without replacement from $\{1, \cdots, m\}$, and let $G(x)= \frac{1}{k} \sum_{j=1}^k \nabla f^{i_j}(x)$.
	\end{itemize}

{\em Stochastic gradient descent} (SGD) addresses the minimization problem \eqref{f-min-prob} by updating the parameter vector $w^t$ at step $t$ with the following the iteration
\begin{equation} \label{SGD-def}
w^{t+1}=w^t- \lambda \ G^t(w^t),
\end{equation}
where $G^t(w^t)$ is an unbiased estimate of $\nabla f(w^t)$ at iterate $t$ of SGD.
We consider the case when the learning rate $\lambda \in (0,\infty)$ is constant, but it is also common to vary the learning at each iteration.
Convergence properties of stochastic gradient descent are well-studied, especially for machine learning, e.g., \cite{adagrad,adam,Convexity_Result,Needell}.

\section{Stochastic Markov Gradient Descent} \label{smgd:sec}

	In this paper, our goal is to minimize a differentiable function $f:\mathbb{R}^n\rightarrow \mathbb{R}$ constrained to a scaled lattice $\alpha \mathbb{Z}^n$ given access to unbiased estimators of the gradient $\nabla f$.  Our approach, stochastic Markov gradient descent (SMGD), generalizes the least squares Markov gradient descent algorithm that was introduced for digital halftoning in \cite{Shen}, and is a variant of SGD where additional randomness is employed to allow the iterates to remain on the lattice. Throughout the remainder of this work, we let $G^t$ denote an unbiased estimator of the gradient at step $t$. We generally use subscripts to denote coordinates of vectors, so that $x^t_i$ denotes the $i$th coordinate of $x^t \in \R^n$ and $G^t(x^t)_i$ denotes the $i${th} coordinate of the unbiased estimator $G^t(x^t)$ of $\nabla f(x^t)$.
	
	The stochastic Markov gradient descent algorithm is described below.

	\begin{algorithm}[h] \label{smgd-alg}
		\textbf{Stochastic Markov Gradient Descent (SMGD)}
		
		Input: $f:\mathbb{R}^n\rightarrow \mathbb{R}$, stepsize $\alpha$, initial $x^0 \in \alpha \mathbb{Z}^n$,
		number of iterations $T$, normalizer $\eta>0$ 
		
		Output: $x^{T}\in \alpha \mathbb{Z}^n$, an estimate of the minimizer
		\begin{algorithmic}
			\FOR{$t=1,\dots, T$ iterations}	
			\STATE Compute an unbiased estimator $G^t(x^t)$ of the gradient vector $\nabla f(x^t)$
			\FOR{each coordinate $x^t_i$}
			\STATE Let $\Delta^t_i$ be a Bernoulli random variable with $\mathbb{P}[\Delta^t_i=1]=\min(\left| G^t(x^t)_i \right|/\eta,1)$\\
			Update $x_i^{t+1}=x_i^t - \alpha \cdot sgn(G^t(x^t)_i)\Delta^t_i$
			\ENDFOR
			\ENDFOR
	\end{algorithmic}\end{algorithm}
	
	We shall make the following probabilistic assumptions for SMGD throughout the paper:
	\begin{itemize}
	\item We assume that $G$ is an unbiased estimator for $\nabla f$, and that $\{G^t\}_{t=1}^T$ are independent identically distributed versions of $G$.
		\item We assume that each unbiased estimator $G^t$ is independent of $x^t$, so that 
	\begin{equation} \label{cond-Gt-E-eq}
		\mathbb{E}[G^t(x^t) | x^t] = \nabla f(x^t).
	\end{equation} 
	Our analysis will require a slightly stronger independence assumption than \eqref{cond-Gt-E-eq}.  
	Let $\mathcal{E}^t$ denote the event $\| \nabla G^t(x^t)\|_\infty \leq \eta$. We assume further that
	\begin{equation} \label{strong-indep-assump}
	\mathbb{E}[G^t(x^t) | x^t, \mathcal{E}^t] = \nabla f(x^t).
	\end{equation}
	\item We assume that the conditional distribution of $\Delta^t_i$ given $G^t$ and $x^t$ is a Bernoulli distribution 
	with
	\begin{equation} \label{Delta-def}
	\mathbb{P}[\Delta^t_i=1 \thinspace | \thinspace G^t, x^t]=\min(\left| G^t(x^t)_i \right|/\eta,1).
	\end{equation}
	\end{itemize}

Standard stochastic gradient descent \eqref{SGD-def} makes updates that move non-discretely in the negative gradient direction $-\nabla f(x^t)$ in expectation.  However, SGD does not in general produce solutions $x^t$ that are constrained to the lattice $\alpha \mathbb{Z}^n$.  To remain constrained to the lattice $\alpha \mathbb{Z}^n$, one should only make discrete updates in each direction. 
Therefore, SMGD instead updates each coordinate of $x^t$ by a fixed amount \textit{with some probability} chosen so that the expected update remains in the same direction as SGD.
 To see this, note that if
  $\mathcal{E}^t$ is the event that $\| G^t(x^t)\|_\infty \leq \eta$ then, by \eqref{strong-indep-assump} and \eqref{Delta-def}, one has
 \begin{align}
 \mathbb{E}[x^{t+1}_i \thinspace | \thinspace x^t, \mathcal{E}^t]
 &=\mathbb{E}[x^t_i-\alpha\cdot\text{sgn} \left(G^t(x^t)_i\right)\Delta_i \thinspace | \thinspace x^t, \mathcal{E}^t]\notag\\
 &= \mathbb{E} \Big[ \mathbb{E}[x^t_i-\alpha\cdot\text{sgn} \left(G^t(x^t)_i\right)\Delta_i \thinspace | \thinspace x^t, G^t,\mathcal{E}^t] \thinspace \Big| \thinspace x^t, \mathcal{E}^t  \Big]\notag \\
 &= \mathbb{E} \Big[ x^t_i-\alpha\cdot\text{sgn} \left(G^t(x^t)_i\right) \frac{|G^t(x^t)_i|}{\eta} \thinspace \Big| \thinspace x^t,\mathcal{E}^t \Big]\notag \\
 &= x^t_i - \frac{\alpha}{\eta} \mathbb{E} \Big[ G^t(x^t)_i \thinspace \Big| \thinspace x^t,\mathcal{E}^t \Big]\notag \\
 &=x^t_i-\frac{\alpha}{\eta}\frac{\partial f}{\partial x_i}(x^t). \label{SMGD-ave}
 \end{align}
 
 In view of \eqref{SMGD-ave}, SMGD can be seen as a modification of SGD that keeps iterates $x^t$ on $\alpha \Z^n$ by introducing extra noise at each update step.
 For this reason, the majority of our error analysis will occur conditioned on the event $\mathcal{E}^t$ which led to the interpretation \eqref{SMGD-ave}.
There are similarities between the lattice resolution $\alpha$ in SMGD and the learning rate in standard SGD; 
we shall see that some convergence properties of SMGD rely on $\alpha$ in the same way that SGD relies on the learning rate, e.g., see Theorem \ref{iterate_convergence_thm}.

Stochastic Markov gradient descent  follows the nomenclature used for least squares Markov gradient descent in \cite{Shen}.  In particular, since the estimators $G^t$ are independent, SMGD generates a random walk on $\alpha \mathbb{Z}^n$ that is a Markov process.

\section{Error estimates: cost function bounds}\label{convergence_section}

This section presents theorems that control how much the cost function $f$ decreases at each iteration of SMGD.
Our first main theorem, Theorem \ref{first_stoch_theorem}, provides an upper bound on the expected value of $f(x^{t+1})$ in terms of gradient information.
We assume that the gradient of $f$ is $L$-Lipschitz continuous, i.e., $\| \nabla f (x) - \nabla f(y)\| \leq L \| x - y\|.$

\begin{theorem}\label{first_stoch_theorem}
Suppose the cost function $f: \R^n \to \R$ has $L$-Lipschitz gradient $\nabla f$.		
Suppose $G^t$ are independent versions of an unbiased estimator $G$ for $\nabla f$.
Let $\mathcal{E}^t$ denote the event that $\|G^t(x^t)\|_\infty \leq \eta$.
The iterate $x^{t+1}$ of SMGD satisfies
		\begin{equation} \label{first_stoch_theorem_eq}
		\mathbb{E} \left[f(x^{t+1}) \thinspace | \thinspace x^t, \mathcal{E}^t \right] \leq f(x^t)
		+ \frac{L\alpha^2}{2\eta} \mathbb{E} \left[ \|G(x^t) \|_1 \thinspace | \thinspace x^t, \mathcal{E}^t \right] - \frac{\alpha}{\eta}  \|\nabla f(x^t)\|_2^2.
		\end{equation}	
	\end{theorem}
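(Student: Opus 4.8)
The plan is to start from the standard descent lemma for functions with $L$-Lipschitz gradient, namely the quadratic upper bound
\[
f(y) \le f(x) + \langle \nabla f(x), y - x \rangle + \frac{L}{2}\|y-x\|_2^2,
\]
applied with $x = x^t$ and $y = x^{t+1}$, and then to take the conditional expectation $\mathbb{E}[\,\cdot\,|\,x^t, \mathcal{E}^t]$ of both sides. After that the proof reduces to evaluating the conditional expectations of the inner-product term and of $\|x^{t+1}-x^t\|_2^2$.

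First I would record the explicit form of the increment: by the update rule in Algorithm~\ref{smgd-alg}, $x_i^{t+1} - x_i^t = -\alpha\, \mathrm{sgn}(G^t(x^t)_i)\,\Delta_i^t$ for each coordinate $i$. Since $\Delta_i^t$ is $\{0,1\}$-valued, $\Delta_i^t{}^2 = \Delta_i^t$, and $\mathrm{sgn}(G^t(x^t)_i)^2$ equals $1$ except when $G^t(x^t)_i = 0$, in which case $\mathbb{P}[\Delta_i^t=1]=0$ by \eqref{Delta-def} so that coordinate contributes nothing; hence $\|x^{t+1}-x^t\|_2^2 = \alpha^2 \sum_{i=1}^n \Delta_i^t$ almost surely.

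Next, both conditional expectations are computed via the tower property, conditioning further on $G^t$ and using \eqref{Delta-def}. The key observation is that on the event $\mathcal{E}^t = \{\|G^t(x^t)\|_\infty \le \eta\}$ the truncation in \eqref{Delta-def} is inactive, i.e. $\min(|G^t(x^t)_i|/\eta, 1) = |G^t(x^t)_i|/\eta$, so that $\mathbb{E}[\Delta_i^t \mid G^t, x^t] = |G^t(x^t)_i|/\eta$ and $\mathrm{sgn}(G^t(x^t)_i)\,\mathbb{E}[\Delta_i^t \mid G^t, x^t] = G^t(x^t)_i/\eta$ on $\mathcal{E}^t$. For the quadratic term this gives $\mathbb{E}[\|x^{t+1}-x^t\|_2^2 \mid x^t, \mathcal{E}^t] = \frac{\alpha^2}{\eta}\,\mathbb{E}[\|G^t(x^t)\|_1 \mid x^t, \mathcal{E}^t]$. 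For the linear term,
\[
\mathbb{E}\big[\langle \nabla f(x^t), x^{t+1}-x^t\rangle \,\big|\, x^t, \mathcal{E}^t\big] = -\frac{\alpha}{\eta}\big\langle \nabla f(x^t),\, \mathbb{E}[G^t(x^t)\mid x^t, \mathcal{E}^t]\big\rangle = -\frac{\alpha}{\eta}\|\nabla f(x^t)\|_2^2,
\]
where the last equality uses precisely the strengthened independence hypothesis \eqref{strong-indep-assump}. Substituting these two identities into the conditional expectation of the descent lemma, and replacing $G^t$ by $G$ (valid since they are identically distributed and independent of $x^t$), yields \eqref{first_stoch_theorem_eq}.

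The main point to be careful about — rather than a genuine obstacle — is the bookkeeping of the nested conditioning: one cannot invoke the plain unbiasedness \eqref{cond-Gt-E-eq} here because we condition on $\mathcal{E}^t$, an event measurable with respect to $G^t$, which is exactly why the stronger assumption \eqref{strong-indep-assump} was built into the model; and one must verify that the min-truncation in \eqref{Delta-def} genuinely disappears on $\mathcal{E}^t$ and that coordinates with $G^t(x^t)_i = 0$ cause no trouble. Everything else is a routine substitution.
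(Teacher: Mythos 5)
Your proposal is correct and follows essentially the same route as the paper: the quadratic descent bound for $L$-Lipschitz gradients (the paper's Lemma \ref{Lemma2} applied with $p=u^t$, giving Corollary \ref{Key_Cor}), followed by tower-property computations of the conditional expectations of the quadratic and linear terms (the paper's Lemmas \ref{E_Omega} and \ref{stochastic_sum_lemma}), using \eqref{strong-indep-assump} exactly where you indicate. Your remark that coordinates with $G^t(x^t)_i=0$ contribute nothing almost surely is a small point the paper leaves implicit, but otherwise the two arguments coincide.
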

	
The proof of Theorem \ref{first_stoch_theorem} is given in Section \ref{stoch-thm-pf-sec}.   
Section \ref{mini-batch-sec} gives further insight into Theorem \ref{first_stoch_theorem} in the special case of gradient estimators using mini-batches.

	\subsection{Proof of  Theorem \ref{first_stoch_theorem}} \label{stoch-thm-pf-sec}
	
	Gradient-based methods implicitly approximate cost functions by linear surrogates and use this approximation to move towards a minimum.	
	Lipschitz continuity of the gradient is a frequent assumption in SGD literature because it controls the quality of linear approximation. 
	We shall use the following standard lemma, e.g., \cite{Analysis_Book}.

	\begin{lemma}\label{Lemma2}
		Let $f:\mathbb{R}^n\rightarrow \mathbb{R}$ be differentiable and suppose that $\nabla f:\mathbb{R}^n\rightarrow \mathbb{R}^n$ is $L-$Lipschitz.
		 If $D_pf(x)$ denotes the directional derivative of $f$ in the direction $p$ at $x$, then
		 \begin{align}\label{Rn Inequality}
		\left|f(x)+\|p\|_2 D_{{p}/{\|p\|_2}}f(x)-f(x+p)\right|\leq \frac{L\|p\|_2^2}{2}.
		\end{align}
	\end{lemma}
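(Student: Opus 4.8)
The plan is to reduce the multivariate statement to a one-dimensional computation along the segment joining $x$ to $x+p$, and then apply the fundamental theorem of calculus together with the Lipschitz hypothesis. First I would rewrite the directional-derivative term as an inner product: since $\nabla f$ exists, the directional derivative of $f$ in the unit direction $p/\|p\|_2$ is $D_{p/\|p\|_2}f(x) = \langle \nabla f(x), p/\|p\|_2\rangle$, so that $\|p\|_2\, D_{p/\|p\|_2}f(x) = \langle \nabla f(x), p\rangle$. The quantity inside the absolute value in \eqref{Rn Inequality} thus equals $f(x) + \langle \nabla f(x), p\rangle - f(x+p)$, the error of the first-order Taylor approximation of $f$ at $x$ evaluated at $x+p$, and the goal becomes the standard quadratic bound on this error.

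Next I would introduce the scalar function $g:[0,1]\to\R$ defined by $g(s) = f(x+sp)$. By the chain rule $g$ is differentiable with $g'(s) = \langle \nabla f(x+sp), p\rangle$, and the fundamental theorem of calculus gives $f(x+p)-f(x) = g(1)-g(0) = \int_0^1 \langle \nabla f(x+sp), p\rangle\, ds$. Subtracting the constant-in-$s$ quantity $\langle \nabla f(x), p\rangle = \int_0^1 \langle \nabla f(x), p\rangle\, ds$ then yields the integral representation $f(x+p) - f(x) - \langle \nabla f(x), p\rangle = \int_0^1 \langle \nabla f(x+sp) - \nabla f(x), p\rangle\, ds$, which isolates the Taylor error as an integral of gradient differences.

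Finally I would estimate the right-hand side. Applying the triangle inequality for integrals, the Cauchy--Schwarz inequality, and the $L$-Lipschitz bound $\|\nabla f(x+sp) - \nabla f(x)\|_2 \le L\|sp\|_2 = Ls\|p\|_2$ pointwise in $s$, I obtain the bound $\int_0^1 Ls\|p\|_2^2\, ds = \tfrac{L\|p\|_2^2}{2}$, which is exactly \eqref{Rn Inequality}. Since $f$ is only assumed differentiable, one technical point to verify is that $s\mapsto \langle \nabla f(x+sp),p\rangle$ is integrable so that the fundamental theorem of calculus applies; this is immediate because the $L$-Lipschitz hypothesis forces $\nabla f$ to be continuous, hence $g'$ is continuous and the integral is a genuine Riemann integral. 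The only edge case is $p=0$, for which the direction $p/\|p\|_2$ is undefined; there the term $\|p\|_2\, D_{p/\|p\|_2}f(x)$ is interpreted as $0$ and the inequality holds trivially. I do not anticipate any substantial obstacle: the result is a routine consequence of the integral form of Taylor's theorem, with the Lipschitz constant entering precisely through the factor $s$ that produces the $\tfrac12$.
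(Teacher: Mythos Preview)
Your proof is correct and is the standard argument for this inequality. Note, however, that the paper does not actually give its own proof of this lemma: it is stated as a ``standard lemma'' with a citation to Nesterov's convex optimization text, so there is no paper-proof to compare against. Your integral-remainder argument is exactly the proof one finds in that reference.
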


	We use a specific case of Lemma \ref{Lemma2} where $p$ is of a form applicable to SMGD. 
	Recall that the SMGD iterates $x^t$ are defined component-wise by
	\begin{equation} \label{smgd-def-eq}
	x_i^{t+1} = x_i^t - \alpha \ {\rm sgn} ( G^t(x^t)_i) \Delta^t_i.
	\end{equation}
	Since each $\Delta_i \in \{0, 1\}$ is a Bernoulli random variable, let $\Omega^t = \{ i \in \{1, 2, \cdots, n\}: \Delta_i \neq 0\}$ denote the set of indices for which $\Delta_i$ is nonzero.  Namely, $\Omega^t$ contains the indices of the coordinates in which $x^t$ undergoes an update, and \eqref{smgd-def-eq} can be written in vector form as $x^{t+1} = x^t + u^t,$ where
	\begin{equation} \label{ut-def-eq}
	u^t= - \alpha \sum_{i \in \Omega^t}  \text{sgn} \left( G^t(x^t)_i\right) e^i
        \end{equation}
        and $\{ e^i \}_{i=1}^n$ is the canonical basis for $\mathbb{R}^n$.

	\begin{corollary}\label{Key_Cor}
	Let $f:\mathbb{R}^n\rightarrow \mathbb{R}$ be differentiable everywhere and suppose that $\nabla f:\mathbb{R}^n\rightarrow \mathbb{R}^n$ is $L-$Lipschitz.
	The iterates $x^t$ of SMGD satisfy
		 \begin{align} 
	 f(x^{t+1}) \leq f(x^t)+ \frac{L\alpha^2|\Omega^t|}{2}-\alpha\sum_{i\in \Omega^t}\text{sgn}\left(G^t(x^t)_i\right) \frac{\partial f}{\partial x_i}(x^t).\label{key_inequality}
	\end{align} 
	\end{corollary}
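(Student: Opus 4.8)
The plan is to obtain Corollary \ref{Key_Cor} as a direct specialization of Lemma \ref{Lemma2} applied to the particular perturbation vector $p = u^t$ that describes a single SMGD update. First I would write $x^{t+1} = x^t + u^t$ with $u^t$ as in \eqref{ut-def-eq}, and compute $\|u^t\|_2^2$. Since $u^t$ is a sum of $|\Omega^t|$ terms, each of the form $\pm \alpha e^i$ with distinct indices $i \in \Omega^t$, its nonzero coordinates are exactly $|\Omega^t|$ entries of absolute value $\alpha$, so $\|u^t\|_2^2 = \alpha^2 |\Omega^t|$ and $\|u^t\|_2 = \alpha \sqrt{|\Omega^t|}$.

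Next I would invoke Lemma \ref{Lemma2} with $x = x^t$ and $p = u^t$, which gives
\[
f(x^{t+1}) = f(x^t + u^t) \leq f(x^t) + \|u^t\|_2 \, D_{u^t/\|u^t\|_2} f(x^t) + \frac{L \|u^t\|_2^2}{2}.
\]
The middle term simplifies because the directional derivative is linear in the direction: $\|u^t\|_2 \, D_{u^t/\|u^t\|_2} f(x^t) = \langle \nabla f(x^t), u^t \rangle = -\alpha \sum_{i \in \Omega^t} \text{sgn}(G^t(x^t)_i) \frac{\partial f}{\partial x_i}(x^t)$, using the explicit form of $u^t$ and the fact that $\langle \nabla f(x^t), e^i \rangle = \frac{\partial f}{\partial x_i}(x^t)$. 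Substituting $\|u^t\|_2^2 = \alpha^2 |\Omega^t|$ into the last term then yields exactly \eqref{key_inequality}.

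The one point that needs a word of care is the degenerate case $\Omega^t = \emptyset$ (no coordinate is updated), in which $u^t = 0$ and the normalized direction $u^t/\|u^t\|_2$ is undefined. In that case the claimed inequality reads $f(x^{t+1}) = f(x^t) \leq f(x^t)$, which holds trivially since $x^{t+1} = x^t$, so I would simply dispose of it separately and apply Lemma \ref{Lemma2} only when $|\Omega^t| \geq 1$. Beyond this trivial case, there is no real obstacle: the corollary is a bookkeeping exercise translating the abstract quadratic-upper-bound lemma into the coordinates picked out by the Bernoulli variables $\Delta^t_i$.
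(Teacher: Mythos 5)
Your proposal is correct and follows exactly the paper's route: the paper likewise proves Corollary \ref{Key_Cor} by applying Lemma \ref{Lemma2} with $x=x^t$, $p=u^t$ and noting $\|u^t\|_2^2=\alpha^2|\Omega^t|$. Your explicit treatment of the degenerate case $\Omega^t=\emptyset$ is a minor added care the paper leaves implicit, but it does not change the argument.
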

	
	\begin{proof}
	Apply Lemma \ref{Lemma2} with $x=x^t$ and $p=u^t$ and note that $\|u^t\|_2^2 = \alpha^2 |\Omega^t|$.	
	\end{proof}

For the proof of Theorem \ref{first_stoch_theorem}, we need two lemmas that compute conditional expectations of the terms in \eqref{key_inequality}.

\begin{lemma}\label{E_Omega}
Let $f: \R^d \to \R$ be a cost function and suppose $G^t$ are unbiased estimators of $\nabla f$.
Let $\mathcal{E}^t$ denote the event that $\|G^t(x^t)\|_\infty \leq \eta$. 
Then SMGD satisfies
\begin{align*}
\mathbb{E}\left[|\Omega^t|\mid x^t, \mathcal{E}^t \right]=\frac{1}{\eta}\mathbb{E}\left[\|G^t(x^t)\|_1 \mid x^t, \mathcal{E}^t \right].\end{align*}
\end{lemma}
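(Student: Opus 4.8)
The plan is to write $|\Omega^t|$ as a sum of Bernoulli indicators, compute each conditional expectation from the definition \eqref{Delta-def}, use the event $\mathcal{E}^t$ to discard the truncation in the $\min$, and then average out $G^t$ via the tower property of conditional expectation.

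First, since $\Omega^t = \{ i : \Delta^t_i \neq 0 \}$ with each $\Delta^t_i \in \{0,1\}$, we have $|\Omega^t| = \sum_{i=1}^n \Delta^t_i$. I would condition on the finer information $(x^t, G^t, \mathcal{E}^t)$ and use linearity of expectation to reduce matters to computing $\mathbb{E}[\Delta^t_i \mid x^t, G^t, \mathcal{E}^t]$ for each $i$. The key observation is that $\mathcal{E}^t = \{ \| G^t(x^t) \|_\infty \leq \eta \}$ is a function of $(x^t, G^t)$, so appending $\mathcal{E}^t$ to the conditioning does not alter the conditional law prescribed in \eqref{Delta-def}; thus $\mathbb{E}[\Delta^t_i \mid x^t, G^t, \mathcal{E}^t] = \min(|G^t(x^t)_i|/\eta, 1)$. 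On the event $\mathcal{E}^t$ every coordinate satisfies $|G^t(x^t)_i| \leq \eta$, so this minimum is simply $|G^t(x^t)_i|/\eta$, and summing over $i$ gives
\[
\mathbb{E}\big[ |\Omega^t| \mid x^t, G^t, \mathcal{E}^t \big] = \frac{1}{\eta} \sum_{i=1}^n |G^t(x^t)_i| = \frac{1}{\eta} \| G^t(x^t) \|_1 .
\]

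Finally, I would apply $\mathbb{E}[\,\cdot \mid x^t, \mathcal{E}^t]$ to both sides and invoke the tower property to remove the inner conditioning on $G^t$, which yields the stated identity. I do not anticipate a genuine obstacle here; the only point worth spelling out carefully is the measurability argument that permits adjoining $\mathcal{E}^t$ to the conditioning in \eqref{Delta-def} without effect, and this is precisely the same bookkeeping already used to derive \eqref{SMGD-ave}.
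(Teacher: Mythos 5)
Your proposal is correct and follows essentially the same route as the paper: write $|\Omega^t|=\sum_{i=1}^n\Delta^t_i$, compute the inner conditional expectation of each $\Delta^t_i$ from \eqref{Delta-def} (with the $\min$ collapsing to $|G^t(x^t)_i|/\eta$ on $\mathcal{E}^t$), and finish with the tower property. Your explicit remark that $\mathcal{E}^t$ is measurable with respect to $(x^t,G^t)$ and so can be adjoined to the conditioning harmlessly is a point the paper leaves implicit, but the argument is the same.
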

	
	\begin{proof}
		Let $\Delta^t_i$ be the Bernoulli random variable with parameter $\frac{1}{\eta}\left|G^t(x^t)_i\right|$, as in the definition of SMGD.
		Observe that $|\Omega^t| = \sum_{i=1}^n \Delta^t_i$, so that by \eqref{strong-indep-assump} we may expand 
		\begin{align*}
		\mathbb{E}\left[|\Omega^t| \ \big| x^t,  \mathcal{E}^t \right]
		=\sum_{i=1}^n\mathbb{E}\left[\Delta^t_i \ \big| \  x^t, \mathcal{E}^t \right]
		&=\sum_{i=1}^n\mathbb{E} \Big[ \mathbb{E}  [\Delta^t_i \mid G^t, x^t] \ \Big| \ x^t, \mathcal{E}^t \Big]\\
		&=\sum_{i=1}^n\mathbb{E}  \Bigg[ \frac{1}{\eta} \left|G^t(x^t)_i\right| \ \Big| \ x^t,  \mathcal{E}^t \Bigg]\\
		&=\frac{1}{\eta}\mathbb{E}  \Bigg[\sum_{i=1}^n  \left|G^t(x^t)_i\right| \ \Big| \ x^t,  \mathcal{E}^t \Bigg]\\
		&=\frac{1}{\eta} \mathbb{E}  \left[ \|G^t(x^t)\|_1 \ \big| \ x^t,  \mathcal{E}^t \right].
		\end{align*}
	\end{proof}

	\begin{lemma}\label{stochastic_sum_lemma}
	Let $f: \R^d \to \R$ be a cost function and suppose $G^t$ are unbiased estimators of $\nabla f$.
	Let $\mathcal{E}^t$ denote the event that $\|G^t(x^t)\|_\infty \leq \eta$.
		Then 
	 \begin{align}
		\mathbb{E}\left[\sum_{i\in \Omega^t} \text{sgn}\left(G^t(x^t)_i\right)\frac{\partial f}{\partial x_i}(x^t) \ \Big| \ x^t, \mathcal{E}^t \right]
		=\frac{1}{\eta}\|\nabla f(x^t)\|_2^2.\label{second_term_stochastic}
		\end{align}
	\end{lemma}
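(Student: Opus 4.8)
The plan is to mirror the proof of Lemma \ref{E_Omega}: rewrite the sum over the random index set $\Omega^t$ as a sum over all coordinates weighted by the Bernoulli variables $\Delta^t_i$, and then peel off conditional expectations one layer at a time with the tower property. Since $i \in \Omega^t$ precisely when $\Delta^t_i = 1$, we have
\[
\sum_{i\in \Omega^t}\text{sgn}\left(G^t(x^t)_i\right)\frac{\partial f}{\partial x_i}(x^t)
= \sum_{i=1}^n \Delta^t_i \,\text{sgn}\left(G^t(x^t)_i\right)\frac{\partial f}{\partial x_i}(x^t).
\]
First I would condition on $(G^t, x^t)$. The key observation is that the event $\mathcal{E}^t = \{\|G^t(x^t)\|_\infty \leq \eta\}$ is measurable with respect to $(G^t,x^t)$, so conditioning additionally on $\mathcal{E}^t$ does not alter the conditional law of $\Delta^t_i$; by \eqref{Delta-def}, $\mathbb{E}[\Delta^t_i \thinspace | \thinspace G^t, x^t, \mathcal{E}^t] = \min(|G^t(x^t)_i|/\eta, 1)$. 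Moreover, on $\mathcal{E}^t$ one has $|G^t(x^t)_i| \leq \eta$, so the minimum equals $|G^t(x^t)_i|/\eta$, and $\text{sgn}(G^t(x^t)_i)\,|G^t(x^t)_i|/\eta = G^t(x^t)_i/\eta$.

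Applying the tower property with the intermediate $\sigma$-algebra generated by $(G^t, x^t)$ then gives
\[
\mathbb{E}\left[\sum_{i=1}^n \Delta^t_i\,\text{sgn}(G^t(x^t)_i)\frac{\partial f}{\partial x_i}(x^t)\thinspace \Big| \thinspace x^t,\mathcal{E}^t\right]
= \sum_{i=1}^n \frac{\partial f}{\partial x_i}(x^t)\,\mathbb{E}\left[\frac{G^t(x^t)_i}{\eta}\thinspace \Big| \thinspace x^t,\mathcal{E}^t\right].
\]
Now I would invoke the strong independence assumption \eqref{strong-indep-assump}, which is exactly what guarantees $\mathbb{E}[G^t(x^t) \thinspace | \thinspace x^t, \mathcal{E}^t] = \nabla f(x^t)$ even after conditioning on $\mathcal{E}^t$; coordinate-wise, $\mathbb{E}[G^t(x^t)_i \thinspace | \thinspace x^t, \mathcal{E}^t] = \partial f/\partial x_i(x^t)$. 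Substituting and summing yields $\tfrac{1}{\eta}\sum_{i=1}^n \big(\partial f/\partial x_i(x^t)\big)^2 = \tfrac{1}{\eta}\|\nabla f(x^t)\|_2^2$, which is \eqref{second_term_stochastic}.

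The only genuine subtlety — and the step I would be most careful about — is the bookkeeping of the nested conditioning: one must note that $\mathcal{E}^t$ lies in the $\sigma$-algebra on which we condition in the inner expectation, so the Bernoulli parameter is unaffected, and one must then use \eqref{strong-indep-assump} rather than the weaker \eqref{cond-Gt-E-eq}, since restricting to $\mathcal{E}^t$ could in principle bias $\mathbb{E}[G^t(x^t)\thinspace | \thinspace x^t]$. Everything else is the same linearity-and-tower-property manipulation already used in Lemma \ref{E_Omega}.
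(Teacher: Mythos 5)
Your proposal is correct and follows essentially the same route as the paper's proof: rewrite the sum over $\Omega^t$ as $\sum_{i=1}^n \Delta^t_i\,\mathrm{sgn}(G^t(x^t)_i)\frac{\partial f}{\partial x_i}(x^t)$, apply the tower property conditioning on $(G^t,x^t,\mathcal{E}^t)$, reduce to $\frac{1}{\eta}\langle G^t(x^t),\nabla f(x^t)\rangle$, and finish with assumption \eqref{strong-indep-assump}. If anything, you are slightly more explicit than the paper about why conditioning on $\mathcal{E}^t$ leaves the Bernoulli parameter unchanged and why the minimum in \eqref{Delta-def} simplifies on that event.
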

	
	\begin{proof}
	Let $\Delta^t_i$ be the Bernoulli random variable with parameter $\frac{1}{\eta}\left|G^t(x^t)_i\right|$, as in the definition of SMGD.
		Recall that $|\Omega^t| = \sum_{i=1}^n \Delta^t_i$, and compute 
		\begin{align}
		\mathbb{E} \left[\sum_{i\in \Omega^t}\text{sgn}\left(G(x^t)_i\right) \right. & \left.  \frac{\partial f}{\partial x_i}(x^t) \thinspace \Big| \thinspace x^t,  \mathcal{E}^t \right] 
		= \mathbb{E} \left[ \mathbb{E} \left[\sum_{i\in \Omega^t}\text{sgn}\left(G^t(x^t)_i\right)
		\frac{\partial f}{\partial x_i}(x^t) \ \Big| \ G^t, x^t, \mathcal{E}^t \right] \thinspace \Big| \thinspace x^t,  \mathcal{E}^t \right] \notag \\
		&=\mathbb{E} \left[ \mathbb{E} \left[\sum_{i=1}^n\Delta^t_i\cdot\text{sgn}
		\left(G^t(x^t)_i\right) \frac{\partial f}{\partial x_i}(x^t) \ \Big| \ G^t, x^t, \mathcal{E}^t \right] \ \Big| \ x^t,  \mathcal{E}^t \right]\nonumber\\
		&=\mathbb{E} \left[ \sum_{i=1}^n\left(\text{sgn}\left(G^t(x^t)_i\right) 
		\frac{\partial f}{\partial x_i}(x^t)\right)\mathbb{E} [\Delta^t_i\thinspace|\thinspace {G}^t, x^t, \mathcal{E}^t]  \ \Big | \ x^t, \mathcal{E}^t \right] \notag\\
		&=\mathbb{E} \left[ \frac{1}{\eta}\sum_{i=1}^n  G^t(x^t)_i \frac{\partial f}{\partial x_i}(x^t) \ \Big| \ x^t,  \mathcal{E}^t \right] \notag\\
		&=\mathbb{E} \left[ \frac{1}{\eta}\langle G^t(x^t),\nabla f(x^t) \rangle \ \Big| \ x^t,  \mathcal{E}^t \right] \notag \\
		&=  \frac{1}{\eta}\langle \mathbb{E}[G^t(x^t)|x^t,  \mathcal{E}^t],\nabla f(x^t) \rangle  \notag \\
		&=  \frac{1}{\eta}\langle \nabla f(x^t),\nabla f(x^t) \rangle = \frac{1}{\eta} \| \nabla f(x^t) \|^2.	\label{inner_product_form}
		\end{align}
		To reach step \eqref{inner_product_form}, recall that 
		$\mathbb{E} [G^t(x^t)|x^t, \mathcal{E}^t] = \nabla f(x^t)$ by the assumption \eqref{strong-indep-assump}.
	\end{proof}
	
	We are now ready to prove Theorem \ref{first_stoch_theorem}.
	
	\begin{proof}[Proof of Theorem \ref{first_stoch_theorem}]
	We have $\mathbb{E} \left[ \|G^t(x^t)\|_1 \mid x^t, \mathcal{E}^t \right] 
	= \mathbb{E} \left[ \|G(x^t)\|_1 \mid x^t,  \mathcal{E}^t \right]$ since $G^t$ are identically distributed versions of $G$.
	Take conditional expectations on both sides of \eqref{key_inequality} in Corollary \ref{Key_Cor}, and then apply Lemmas \ref{E_Omega} and \ref{stochastic_sum_lemma} to obtain
\begin{align*}
	\mathbb{E}[f(x^{t+1})\mid x^t, \mathcal{E}^t]
	&\leq \mathbb{E}\left[ f(x^t) + \frac{L\alpha^2|\Omega^t|}{2}-\alpha \sum_{i\in \Omega^t}\text{sgn}\left(G^t(x^t)_i\right) \frac{\partial f}{\partial x_i}(x^t) 
	\ \Bigg| \ x^t,  \mathcal{E}^t\right]\\
	&\leq f(x^t) + \frac{L\alpha^2}{2\eta}\mathbb{E} \left[ \|G^t(x^t)\|_1 \mid x^t,  \mathcal{E}^t \right] -\frac{\alpha}{\eta}\|\nabla f(x^t)\|^2\\
	&= f(x^t) + \frac{L\alpha^2}{2\eta}\mathbb{E} \left[ \|G(x^t)\|_1 \mid x^t, \mathcal{E}^t \right] -\frac{\alpha}{\eta}\|\nabla f(x^t)\|^2.
	\end{align*}
\end{proof}

\subsection{Cost function bounds for mini-batch estimators} \label{mini-batch-sec}

Theorem \ref{first_stoch_theorem} depends heavily on the expected $\ell_1$ norm of the 
gradient estimator $\mathbb{E} [ \|G^t(x^t)\|_1 \thinspace | \thinspace x^t, \mathcal{E}^t]$, where $\mathcal{E}^t$ is the event that $\|\nabla G^t(x^t)\|_1 \leq \eta$.
This section studies the quantity $\mathbb{E} [ \|G^t(x^t)\|_1 \thinspace | \thinspace x^t, \mathcal{E}^t]$ 
for the special case when $G^t$ are {\em minibatch gradient estimators}. 
For simplicity, we focus on the case when $\mathcal{E}^t$ occurs almost surely, so that 
$\mathbb{E} [ \|G^t(x^t)\|_1 \thinspace | \thinspace x^t, \mathcal{E}^t] = \mathbb{E} [ \|G^t(x^t)\|_1 \thinspace | \thinspace x^t]$.
Since $G^t$ is independent of $x^t$, we proceed by deriving estimates for $\mathbb{E} [ \|G^t(x)\|_1]$ with fixed $x \in \R^n$.

Mini-batch estimates are a commonly used technique to improve neural network training \cite{AlexNet,Courbariaux, JMLRPaper}.  
This section only considers cost functions of the special form  $f=\frac{1}{m}\sum_{i=1}^mf^i$ where each $f^i$ is differentiable. 
A mini-batch estimator of size $k$ selects $k$ distinct indices $\{i_j\}_{j=1}^k$ uniformly at random from $\{1,2, \cdots, m\}$ and then defines 
$G=\frac{1}{k}\sum_{j=1}^k\nabla f^{i_j}$ as an unbiased estimator of $\nabla f$.  With slight abuse of notation, let $G_k$ denote a minibatch estimator of size $k$.

The following theorem provides bounds on $\mathbb{E} [ \|G^t(x)\|_1]$ for mini-batch estimates.
It will be convenient to introduce some notation for the proof.  Let $[m]$ denote the set $\{1,\dots, m\}$, and let $A^k$ denote the collection of all subsets of size $k$ of a given subset $A \subset [m]$.  For example, $[m]^k$ consists of all subsets of $\{1,\dots, m\}$ containing $k$ elements.  We also let $A^c$ denote the complement of $A$ in $[m]$.

		\begin{theorem}\label{minibatch_result}
		Fix a cost function $f=\frac{1}{m}\sum_{i=1}^mf^i$ where each $f^i$ is differentiable. 
		Let $G_k=\frac{1}{k}\sum_{j=1}^k\nabla f^{i_j}$ be the mini-batch estimator of size $k$ for $\nabla f$.  Let $\| \cdot \|$ be any norm on $\R^n$.  Given $x\in \R^n$, 
		$\mathbb{E}\left[ \|G_k(x)\|\right]$ is non-increasing in $k$ and satisfies the bound
	        \begin{align}
		\mathbb{E}\left[ \|G_k(x)\| \right] \leq\frac{m}{k}\|\nabla f(x)\|+\frac{m-k}{k}\mathbb{E}\left[ \|G_{m-k}(x)\|\right].  \label{minibatch_bound}
		\end{align}
	\end{theorem}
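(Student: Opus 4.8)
The plan is to fix $x \in \R^n$, abbreviate $g_i := \nabla f^i(x)$ for $i \in [m]$, and for a nonempty $A \subseteq [m]$ write $G_A := \frac{1}{|A|}\sum_{i\in A} g_i$. Then $G_k(x)$ has the distribution of $G_A$ for $A$ uniform in $[m]^k$, while $\nabla f(x) = G_{[m]}$. Everything reduces to convexity of $\|\cdot\|$ together with two elementary averaging identities over subsets of $[m]$.

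For the bound \eqref{minibatch_bound}, I would use the complementary splitting
\[
\nabla f(x) = \frac{1}{m}\sum_{i\in A} g_i + \frac{1}{m}\sum_{i\in A^c} g_i = \frac{k}{m}\,G_A + \frac{m-k}{m}\,G_{A^c},
\]
which holds for every $A \in [m]^k$. Solving for $G_A$ and applying the triangle inequality gives $\|G_A\| \le \frac{m}{k}\|\nabla f(x)\| + \frac{m-k}{k}\|G_{A^c}\|$. Averaging over $A$ uniform in $[m]^k$, and observing that then $A^c$ is uniform in $[m]^{m-k}$ so that $\mathbb{E}[\|G_{A^c}\|] = \mathbb{E}[\|G_{m-k}(x)\|]$, yields \eqref{minibatch_bound} at once.

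For the monotonicity statement I would show $\mathbb{E}[\|G_{k+1}(x)\|] \le \mathbb{E}[\|G_k(x)\|]$ for $1 \le k \le m-1$. The key identity is that a size-$(k+1)$ average is itself the average of its size-$k$ sub-averages: for $B \in [m]^{k+1}$,
\[
G_B = \frac{1}{k+1}\sum_{j\in B} G_{B\setminus\{j\}},
\]
since each $g_i$, $i\in B$, is omitted from exactly one of the $k+1$ inner terms. Hence $\|G_B\| \le \frac{1}{k+1}\sum_{j\in B}\|G_{B\setminus\{j\}}\|$; summing over $B\in[m]^{k+1}$ and re-indexing the right-hand side by $A = B\setminus\{j\}\in[m]^k$ — each such $A$ arising from exactly $m-k$ pairs $(B,j)$ — gives $\binom{m}{k+1}\,\mathbb{E}[\|G_{k+1}(x)\|] \le \frac{m-k}{k+1}\binom{m}{k}\,\mathbb{E}[\|G_k(x)\|]$, and the conclusion follows from $\binom{m}{k}/\binom{m}{k+1} = (k+1)/(m-k)$. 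Equivalently, deleting a uniformly random element from a uniformly random size-$(k+1)$ subset yields a uniformly random size-$k$ subset, so $G_{k+1}$ is a conditional expectation of a copy of $G_k$ and Jensen's inequality applies.

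I do not anticipate a genuine obstacle: the statement is a deterministic fact about the fixed vectors $g_1,\dots,g_m$, so the event $\mathcal{E}^t$ plays no role here, and the norm is arbitrary only because nothing beyond the triangle inequality and homogeneity is used. The one place to be careful is the bookkeeping in the monotonicity argument — correctly counting the pairs $(B,j)$ that produce a given $A$ and simplifying the resulting ratio of binomial coefficients — but this is entirely routine.
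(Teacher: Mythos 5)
Your proposal is correct and follows essentially the same route as the paper: the bound comes from the complementary decomposition $m\nabla f(x)=\sum_{i\in A}\nabla f^i(x)+\sum_{i\in A^c}\nabla f^i(x)$ plus the triangle inequality and the bijection $A\mapsto A^c$, and the monotonicity comes from writing a size-$(k+1)$ average as the average of its size-$k$ sub-averages and counting how often each subset appears (the paper states the identical argument with indices shifted by one, comparing $k$ to $k-1$). The closing remark that $G_{k+1}$ is a conditional expectation of $G_k$, so Jensen applies, is a nice conceptual addition not made explicit in the paper.
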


	\begin{proof}
	We first show that $\mathbb{E}\left[ \|G_k(x)\|\right]$ is non-increasing in $k$, by showing that $\mathbb{E}\left[ \|G_k(x)\|\right] \leq \mathbb{E}\left[ \|G_{k-1}(x)\|\right]$.
		By the definition of mini-batch estimates one has 
		\begin{align*}
		\mathbb{E}\left[ \|G_k(x)\|\right] = \frac{1}{\binom{m}{k}}\sum_{A\in [m]^k}\left\|\frac{1}{k}\sum_{i\in A}\nabla f^i(x)\right\|.
		\end{align*} 
		Fix any $A\in [m]^k$ and notice \begin{align*}
		\sum_{i\in A}\nabla f^i(x)=\sum_{B\in A^{k-1}}\sum_{i\in B}\frac{1}{k-1}\nabla f^i(x)
		\end{align*}
		because for each index $i\in A$, there are exactly $k-1$ subsets $B\in A^{k-1}$ containing $i$. Therefore, 
		\begin{align}
		\mathbb{E}\left[\|G_k(x)\|\right]&= \frac{1}{\binom{m}{k}}\sum_{A\in [m]^k}\left\|\frac{1}{k}\sum_{B\in A^{k-1}}\sum_{i\in B}\frac{1}{k-1}\nabla f^i(x)\right\| \notag \\
		&\leq \frac{1}{\binom{m}{k}}\frac{1}{k}\sum_{A\in [m]^k}\sum_{B\in A^{k-1}}\left\|\sum_{i\in B}\frac{1}{k-1}\nabla f^i(x)\right\|. \label{minibatch-pf-eq0}
		\end{align}
		One has that
		\begin{equation} \label{the-sum}
		\sum_{A\in [m]^k}\sum_{B\in A^{k-1}}\left\|\sum_{i\in B}\frac{1}{k-1}\nabla f^i(x)\right\| = (m-k+1)\sum_{B\in [m]^{k-1}}\left\|\sum_{i\in B}\frac{1}{k-1}\nabla f^i(x)\right\|.
		\end{equation}
		To see this, note that the double sum $\sum_{A\in [m]^k}\sum_{B\in A^{k-1}}$ sums over each set $B$ of size $k-1$ once for each size $k$ set $A$ which contains $B$. There are $m-(k-1)$ elements of $[m]$ that can be added to $B$ to get a size $k$ set.  Therefore, each $B$ shows up $m-k+1$ times in this double summation, and \eqref{the-sum} follows.
		
		Combining \eqref{minibatch-pf-eq0} and \eqref{the-sum}, gives
		\begin{align*}
		\mathbb{E}\left[\|G_k(x)\|\right] &\leq \frac{m-k+1}{k\binom{m}{k}}\sum_{B\in [m]^{k-1}}\left\|\sum_{i\in B}\frac{1}{k-1}\nabla f^i(x)\right\|\\
		&=\frac{(m-k+1)\binom{m}{k-1}}{k\binom{m}{k}}\mathbb{E}\left[\|G_{k-1}(x)\|\right].
		\end{align*}
		A computation shows that $\frac{(m-k+1)\binom{m}{k-1}}{k\binom{m}{k}}=1$ and the desired 
		bound $\mathbb{E}\left[\|G_k(x)\|\right] \leq \mathbb{E}\left[\|G_{k-1}(x)\|\right]$ follows.

		It remains to prove the bound \eqref{minibatch_bound}. 
		Using the triangle inequality and $f = \frac{1}{m} \sum_{i=1}^m f^i$ gives
		\begin{align}
		\mathbb{E}\left[ \|G_k\|\right] &=\frac{1}{\binom{m}{k}}\sum_{A\in [m]^k}\left\|\frac{1}{k}\left(\sum_{i\in A}\nabla f^i(x)\right)\right\| \notag \\
		&\leq \frac{1}{\binom{m}{k}}\sum_{A\in [m]^k}\left\|\frac{1}{k}\left(m\nabla f(x)-\sum_{i\in A}\nabla f^i(x)\right)\right\|+\frac{m}{k}\|\nabla f(x)\| \notag \\
		&=\frac{1}{\binom{m}{k}}\sum_{A\in [m]^k}\left\|\frac{1}{k}\sum_{i\in A^c}\nabla f^i(x)\right\|+\frac{m}{k}\|\nabla f(x)\| \notag \\
		&=\frac{m-k}{k}\cdot \frac{1}{\binom{m}{k}}\sum_{A\in [m]^k}\left\|\frac{1}{m-k}\sum_{i\in A^c}\nabla f^i(x)\right\|+\frac{m}{k}\|\nabla f(x)\|.	\label{minibatch-pf-eq1}
		\end{align}
		Since $\binom{m}{k}=\binom{m}{m-k}$ one has
		\begin{align}
		\mathbb{E}\left[ \|G_{m-k}(x)\|\right] &= \frac{1}{\binom{m}{m-k}}\sum_{A\in [m]^{m-k}}\left\|\frac{1}{m-k}\sum_{i\in A}\nabla f^i(x)\right\| \notag \\
		&= \frac{1}{\binom{m}{k}}\sum_{A\in [m]^{k}}\left\|\frac{1}{m-k}\sum_{i\in A^c}\nabla f^i(x)\right\|. \label{minibatch-pf-eq2}
		\end{align}
		Combining \eqref{minibatch-pf-eq1} and \eqref{minibatch-pf-eq2} gives \eqref{minibatch_bound} and completes the proof.
		\end{proof}

		While the above theorem may be difficult to parse, we offer two main insights related to Theorem \ref{minibatch_result}. Recall that the quantity we are bounding controls the performance of SMGD so a smaller value for $\mathbb{E}\left[\|G_k\|\right]$ conceivably implies better algorithm performance. With this in mind, because the expected value is non-increasing in $k$, choosing a larger mini-batch never worsens the performance. Second, as $k$ approaches $m$ the value $\mathbb{E}\left[ \|G_k\|\right]$ approaches $\|\nabla f\|$, the optimal value.

	\section{Error estimates: rates of convergence}\label{stoch_conv_sec}
	
	In this section we analyze the {\em rate of convergence} for SMGD when the cost function is assumed to be strongly convex.  
	A differentiable function $f:\R^n \to \R$ is strongly convex with parameter $\mu$, or simply $\mu-$strongly convex, provided that, for every $x,y\in \mathbb{R}^n$,
		 \begin{align}
		\langle \nabla  f(x)-\nabla  f(y),x-y\rangle \geq \mu \|x-y\|^2_2.  \label{StrongConvDef}
		\end{align}
	It follows from the Cauchy-Schwartz inequality that a $\mu$-strongly convex differentiable function $f$ satisfies
	\begin{align}
	\|\nabla f(x)-\nabla f(y)\|_2 \geq \mu \|x-y\|_2. \label{conv_implication}
	\end{align}
	Strongly convex functions are well-studied in optimization and it is known that a differentiable strongly convex function attains a unique minimum, see e.g., \cite{Convexity_Result}.

	Our next main result
	provides bounds on how fast the iterates of SMGD $x^t$ approach the minimizer $x^*$ of the cost function.
	
	\begin{theorem}\label{iterate_convergence_thm}
	Suppose the cost function $f: \R^n \to \R$ is $\mu$-strongly convex and has $L$-Lipschitz gradient $\nabla f$.		
Suppose $G^t$ are independent versions of an unbiased estimator $G$ for $\nabla f$, and that $G$ is $L$-Lipschitz continuous almost surely.
Let $\mathcal{E}^t$ denote the event that $\|G^t(x^t)\|_\infty \leq \eta$.
The iterates $x^t$ of SMGD satisfy
		 \begin{align}
		\mathbb{E}\left[\|x^{t+1}-x^*\|_2^2 \thinspace | \thinspace x^t, \mathcal{E}^t \right]\leq \left(1-\frac{2\alpha \mu}{\eta}\right)\|x^t-x^*\|_2^2+\frac{L\alpha^2 \sqrt{n}}{\eta} \|x^t-x^*\|_2 + \frac{\alpha^2}{\eta}\mathbb{E}\left[\| G^t(x^*)\|_1 \thinspace | \thinspace x^t, \mathcal{E}^t  \right]. \label{iter-conv-thm-eq}
		\end{align}
	\end{theorem}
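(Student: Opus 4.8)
The plan is to expand $\|x^{t+1}-x^*\|_2^2$ using the vector form $x^{t+1}=x^t+u^t$ from \eqref{ut-def-eq}, take the conditional expectation $\mathbb{E}[\,\cdot\mid x^t,\mathcal{E}^t]$ term by term, and then control the pieces using strong convexity, Lemma \ref{E_Omega}, and the almost sure Lipschitz hypothesis on $G$. Concretely, I would first write
\begin{equation*}
\|x^{t+1}-x^*\|_2^2 = \|x^t-x^*\|_2^2 + 2\langle x^t-x^*,\, u^t\rangle + \|u^t\|_2^2,
\end{equation*}
and recall that $\|u^t\|_2^2=\alpha^2|\Omega^t|$, so that Lemma \ref{E_Omega} immediately gives $\mathbb{E}[\|u^t\|_2^2\mid x^t,\mathcal{E}^t]=\tfrac{\alpha^2}{\eta}\mathbb{E}[\|G^t(x^t)\|_1\mid x^t,\mathcal{E}^t]$.

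For the cross term, using $u^t=-\alpha\sum_{i\in\Omega^t}\text{sgn}(G^t(x^t)_i)e^i$ and conditioning first on $G^t$ exactly as in the proof of Lemma \ref{stochastic_sum_lemma} (applying \eqref{Delta-def} to evaluate $\mathbb{E}[\Delta_i^t\mid G^t,x^t]$, then \eqref{strong-indep-assump} to take expectation over $G^t$), I would obtain
\begin{equation*}
\mathbb{E}[\langle x^t-x^*,\,u^t\rangle\mid x^t,\mathcal{E}^t] = -\frac{\alpha}{\eta}\langle\nabla f(x^t),\,x^t-x^*\rangle.
\end{equation*}
Since $x^*$ is the minimizer of the differentiable strongly convex $f$ we have $\nabla f(x^*)=0$, so \eqref{StrongConvDef} with $y=x^*$ gives $\langle\nabla f(x^t),x^t-x^*\rangle\ge\mu\|x^t-x^*\|_2^2$; multiplying by $2$ and combining the three terms yields
\begin{equation*}
\mathbb{E}[\|x^{t+1}-x^*\|_2^2\mid x^t,\mathcal{E}^t] \le \Big(1-\frac{2\alpha\mu}{\eta}\Big)\|x^t-x^*\|_2^2 + \frac{\alpha^2}{\eta}\,\mathbb{E}\big[\|G^t(x^t)\|_1\mid x^t,\mathcal{E}^t\big].
\end{equation*}

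The last step is to replace $\|G^t(x^t)\|_1$ by $\|G^t(x^*)\|_1$. For this I would use the triangle inequality $\|G^t(x^t)\|_1\le\|G^t(x^t)-G^t(x^*)\|_1+\|G^t(x^*)\|_1$ together with the norm comparison $\|v\|_1\le\sqrt{n}\,\|v\|_2$ and the assumed almost sure $L$-Lipschitz continuity of $G^t$, giving $\|G^t(x^t)-G^t(x^*)\|_1\le\sqrt{n}\,L\,\|x^t-x^*\|_2$ almost surely, hence also under the conditioning on $\mathcal{E}^t$. Substituting this bound into the previous display produces exactly \eqref{iter-conv-thm-eq}.

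The computation is essentially routine; the only places that need care are the conditional-expectation bookkeeping for the cross term — the nested conditioning on $G^t$ and the use of the strengthened independence assumption \eqref{strong-indep-assump} rather than merely \eqref{cond-Gt-E-eq} — and making sure the Lipschitz estimate for $G$ is applied in the $\ell_2$ norm, converted to $\ell_1$ via the $\sqrt{n}$ factor, and legitimately transferred through the conditioning on the event $\mathcal{E}^t$ (which is fine, since the Lipschitz bound holds pointwise almost surely).
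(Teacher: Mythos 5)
Your proposal is correct and follows essentially the same route as the paper: the same expansion of $\|x^t-x^*+u^t\|_2^2$, the same conditional-expectation evaluation of the cross and quadratic terms (via \eqref{Delta-def} and \eqref{strong-indep-assump}), strong convexity with $\nabla f(x^*)=0$, and the triangle inequality plus the $\ell_1$--$\ell_2$ comparison and the almost sure Lipschitz bound on $G$ to replace $\|G^t(x^t)\|_1$ by $\|G^t(x^*)\|_1$. If anything, your final step is stated slightly more cleanly than the paper's, which places the $\sqrt{n}$ factor in front of an $\ell_1$ norm in an intermediate display before arriving at the same bound.
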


\begin{proof}
Let $u^t=-\alpha \cdot \text{sgn}\left(G^t(x^t)\right)\Delta^t_i$ be the random vector defined in \eqref{ut-def-eq}, so that the SMGD iteration may be written as $x^{t+1}=x^t+u^t$.
Thus,
\begin{align}
\mathbb{E}[ \| x^{t+1} - x^*\|_2^2 \thinspace | \thinspace x^t, \mathcal{E}^t] &= \mathbb{E}[ \|x^t - x^* + u^t \|_2^2 \thinspace | \thinspace x^t, \mathcal{E}^t ] \notag \\
&= \|x^t - x^* \|_2^2 + 2 \mathbb{E} [ \langle x^t -x^*, u^t \rangle\thinspace | \thinspace x^t, \mathcal{E}^t ] + \mathbb{E}[ \langle u^t, u^t \rangle \thinspace | \thinspace x^t, \mathcal{E}^t]. \label{conv-pf-eq1}
\end{align}
Note that
\begin{align}
\mathbb{E}[ \langle x^t - x^*, u^t \rangle \thinspace | \thinspace x^t, \mathcal{E}^t] & = \sum_{i=1}^n (x^t-x^*)_i \thinspace \mathbb{E}[u^t_i \thinspace | \thinspace x^t, \mathcal{E}^t] \notag \\
&= -\alpha \sum_{i=1}^n (x^t-x^*)_i \thinspace \mathbb{E}[ {\rm sgn} (G^t(x^t))_i) \Delta^t_i \thinspace |\thinspace x^t, \mathcal{E}^t]. \label{conv-pf-eq2}
\end{align}
Recalling the definition of $\Delta^t$ in \eqref{Delta-def} and using \eqref{strong-indep-assump} gives
\begin{align}
\mathbb{E}[{\rm sgn}(G^t(x^t)_i) \Delta^t_i \thinspace | \thinspace x^t, \mathcal{E}^t]
&= \mathbb{E}[ \mathbb{E}[{\rm sgn}(G^t(x^t)_i) \Delta^t_i \thinspace | \thinspace x^t, \mathcal{E}^t, G^t] \thinspace | \thinspace x^t, \mathcal{E}^t] \notag \\
&= \mathbb{E}[ {\rm sgn}(G^t(x^t)_i) \frac{|G^t(x^t)_i |}{\eta}  \thinspace | \thinspace x^t, \mathcal{E}^t] \notag \\
&= \frac{1}{\eta} \nabla f(x^t). \label{conv-pf-eq3}
\end{align}	
Combining \eqref{conv-pf-eq2} and \eqref{conv-pf-eq3}  gives
\begin{equation} 
\mathbb{E}[ \langle x^t - x^*, u^t \rangle \thinspace | \thinspace x^t, \mathcal{E}^t] = \frac{-\alpha}{\eta} \langle x^t - x^*, \nabla f (x^t) \rangle.  \label{conv-pf-eq4}
\end{equation}
Next note that
\begin{align}
\mathbb{E}[ \langle u^t, u^t \rangle \thinspace | \thinspace x^t, \mathcal{E}^t] 
&=\mathbb{E} [ \mathbb{E}[ \langle R^t, R^t \rangle \thinspace | \thinspace  x^t, G^t, \mathcal{E}^t] \thinspace| \thinspace x^t, \mathcal{E}^t ] \notag \\
&= \alpha^2 \sum_{i=1}^n \mathbb{E} [ \mathbb{E}[(\Delta^t_i)^2 \thinspace |\thinspace  x^t, G^t, \mathcal{E}^t] | x^t, \mathcal{E}^t] \notag \\
&= \alpha^2 \sum_{i=1}^n \mathbb{E} [ \frac{|G^t(x^t)_i|}{\eta}\thinspace  |\thinspace x^t, \mathcal{E}^t] \notag \\
&= \frac{\alpha^2}{\eta} \mathbb{E}[ \| G^t(x^t)\|_1 \thinspace | \thinspace x^t, \mathcal{E}^t].
\end{align}

Combining \eqref{conv-pf-eq1}, \eqref{conv-pf-eq4}, \eqref{conv-pf-eq5}, and using that $\nabla f(x^*)=0$ gives
\begin{align}
\mathbb{E}[ \|x^{t+1} - x^*\|_2^2 \thinspace | \thinspace x^t, \mathcal{E}^t]
&= \|x^t - x^*\|_2^2 - \frac{2\alpha}{\eta} \langle x^t -x^*, \nabla f(x^t) \rangle + \frac{\alpha^2}{\eta} \mathbb{E}[ \| G^t(x^t)\|_1 \thinspace | \thinspace x^t, \mathcal{E}^t] \notag \\
&=  \|x^t - x^*\|_2^2 - \frac{2\alpha}{\eta} \langle x^t -x^*, \nabla f(x^t) -\nabla f(x^*) \rangle + \frac{\alpha^2}{\eta} \mathbb{E}[ \| G^t(x^t)\|_1 \thinspace | \thinspace x^t, \mathcal{E}^t] \notag \\
&\leq \|x^t - x^*\|_2^2 - \frac{2\alpha}{\eta} \langle x^t -x^*, \nabla f(x^t) -\nabla f(x^*) \rangle \notag \\
&+ \frac{\alpha^2}{\eta} \mathbb{E}[ \| G^t(x^t)-G^t(x^*)\|_1 \thinspace | \thinspace x^t, \mathcal{E}^t] 
+ \frac{\alpha^2}{\eta} \mathbb{E}[ \| G^t(x^*)\|_1 \thinspace | \thinspace x^t, \mathcal{E}^t]. \label{conv-pf-eq5}
\end{align}

Applying strong convexity and H\"older's inequality in \eqref{conv-pf-eq5} gives
\begin{align}
\mathbb{E}[ \|x^{t+1} - x^*\|_2^2 \thinspace | \thinspace x^t, \mathcal{E}^t]
&\leq \|x^t - x^*\|_2^2- \frac{2\alpha \mu}{\eta} \|x^t - x^*\|_2^2 \notag \\
&+ \frac{\alpha^2 \sqrt{n}}{\eta} \mathbb{E}[ \| G^t(x^t)-G^t(x^*)\|_1 \thinspace | \thinspace x^t, \mathcal{E}^t] 
+\frac{\alpha^2}{\eta} \mathbb{E}[ \| G^t(x^*)\|_1 \thinspace | \thinspace x^t, \mathcal{E}^t]. \label{conv-pf-eq7}
\end{align}

Finally, since $G$ is $L$-Lipschitz, \eqref{conv-pf-eq7} yields
\begin{align*}
\mathbb{E}[ \|x^{t+1} - x^*\|_2^2 \thinspace | \thinspace x^t]
& \leq \left( 1 - \frac{2 \alpha \mu}{\eta}\right) \|x^t - x^*\|_2^2
+ \frac{\alpha^2 \sqrt{n}L}{\eta} \|x^t - x^*\|_2
+\frac{\alpha^2}{\eta} \mathbb{E}[ \| G^t(x^*)\|_1 \thinspace | \thinspace x^t, \mathcal{E}^t].
\end{align*}
\end{proof}

	Theorem \ref{iterate_convergence_thm} can be viewed as an analogue for SMGD of the convergence results for SGD in \cite{Needell}. 
	Changing notation to match our own, the work in \cite{Needell} shows that, under similar assumptions as Theorem \ref{iterate_convergence_thm}, standard SGD
	with learning rate of $\gamma$ satisfies 
\begin{equation} \label{sgd-nsw-eq}
\mathbb{E} [ \|x^{t+1}-x^*\|_2^2 | x^t] \leq (1-2\gamma \mu)\|x^t-x^*\|_2^2+2\gamma^2L\|x^t-x^*\|_2^2 + 2\gamma ^2 \mathbb{E} [ \|G(x^*)\|_2^2]. 
\end{equation}
This illustrates that the learning rate $\gamma$ for SGD plays an analogous role as the lattice resolution $\alpha$ for SMGD.
It is also worth noting some differences between \eqref{iter-conv-thm-eq} and \eqref{sgd-nsw-eq}.  The middle term in 
\eqref{iter-conv-thm-eq} is a squared norm $\|x^t-x^*\|_2^2$ whereas the middle term in \eqref{sgd-nsw-eq} is not squared; unlike SGD this means that SMGD errors will generally not decrease exponentially fast until saturation.  Moreover, the third terms in \eqref{iter-conv-thm-eq} and \eqref{sgd-nsw-eq} 
reflect the different dependences of SMDG and SGD on the choice of unbiased estimator for $\nabla f$.

	\section{Error bounds: the non-stochastic setting}\label{non_stoch_sec}
	
	In this section we consider the special case of SMGD where the unbiased gradient estimator $G^t$ is the non-stochastic estimate $G=\nabla f$.  
	We shall refer to this special case of SMGD as {\em Markov gradient descent (MGD)}.

The following result is a corollary of Theorem \ref{first_stoch_theorem_eq}.

\begin{corollary} \label{First_non_stoch_theorem}
Suppose the cost function $f: \R^n \to \R$ has $L$-Lipschitz gradient $\nabla f$.	
Let $\mathcal{E}^t$ denote the event $\|\nabla f(x^t)\|_{\infty} \leq \eta$.
The iterate $x^{t+1}$ of MGD satisfies
\begin{align*}
\mathbb{E}\left[f(x^{t+1}) \thinspace | \thinspace x^t, \mathcal{E}^t \right] \leq f(x^t)+ \frac{L\alpha^2}{2\eta} \|\nabla f(x^t)\|_1 - \frac{\alpha}{\eta} \|\nabla f(x^t)\|_2^2.
\end{align*}
\end{corollary}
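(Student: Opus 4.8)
The plan is to obtain this as a direct specialization of Theorem \ref{first_stoch_theorem} to the degenerate unbiased estimator $G = \nabla f$. First I would observe that the deterministic map $G^t := \nabla f$ is trivially an unbiased estimator of $\nabla f$: one has $\mathbb{E}[G^t(x^t) \mid x^t] = \nabla f(x^t)$, and since $G^t$ carries no randomness of its own, the stronger independence hypothesis \eqref{strong-indep-assump}, $\mathbb{E}[G^t(x^t) \mid x^t, \mathcal{E}^t] = \nabla f(x^t)$, also holds automatically for any conditioning event. The copies $\{G^t\}$ are moreover i.i.d. (indeed identical), so all hypotheses of Theorem \ref{first_stoch_theorem} are met.

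Next I would note that under this choice the event $\mathcal{E}^t = \{\|G^t(x^t)\|_\infty \le \eta\}$ is precisely the event $\{\|\nabla f(x^t)\|_\infty \le \eta\}$ appearing in the statement of the corollary, matching the notation. Then I would simply evaluate the conditional expectation on the right-hand side of \eqref{first_stoch_theorem_eq}: since $G(x^t) = \nabla f(x^t)$ is a deterministic function of $x^t$, we have
\[
\mathbb{E}\left[\|G(x^t)\|_1 \thinspace | \thinspace x^t, \mathcal{E}^t\right] = \|\nabla f(x^t)\|_1.
\]
Substituting this into \eqref{first_stoch_theorem_eq} yields exactly
\[
\mathbb{E}\left[f(x^{t+1}) \thinspace | \thinspace x^t, \mathcal{E}^t\right] \le f(x^t) + \frac{L\alpha^2}{2\eta}\|\nabla f(x^t)\|_1 - \frac{\alpha}{\eta}\|\nabla f(x^t)\|_2^2,
\]
which is the claimed inequality.

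There is essentially no obstacle here; the only point requiring a word of care is the bookkeeping of the conditioning events, namely verifying that the specialization does not disturb the role of $\mathcal{E}^t$ and that \eqref{strong-indep-assump} is vacuously satisfied by a deterministic estimator. Once that is checked, the corollary follows in two lines. (One could alternatively give a self-contained proof by applying Corollary \ref{Key_Cor} with $G^t = \nabla f$ and then taking the conditional expectation over the Bernoulli variables $\Delta_i^t$ exactly as in Lemmas \ref{E_Omega} and \ref{stochastic_sum_lemma}, but invoking Theorem \ref{first_stoch_theorem} directly is cleaner.)
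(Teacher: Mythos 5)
Your proposal is correct and follows exactly the route the paper intends: the paper presents this as an immediate corollary of Theorem \ref{first_stoch_theorem}, obtained by taking $G=\nabla f$, under which $\mathcal{E}^t$ becomes $\{\|\nabla f(x^t)\|_\infty\leq\eta\}$ and $\mathbb{E}[\|G(x^t)\|_1\mid x^t,\mathcal{E}^t]=\|\nabla f(x^t)\|_1$. Your added remarks on why the deterministic estimator vacuously satisfies \eqref{strong-indep-assump} are a sound bit of bookkeeping that the paper leaves implicit.
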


The following consequence of Corollary \ref{First_non_stoch_theorem} shows that iterates $f(x^{t+1})$ of the cost function decrease in expectation 
when the gradient $\nabla f(x^t)$ has sufficiently large norm.
	
\begin{corollary} \label{nonstoch-cor1}
Suppose the cost function $f: \R^n \to \R$ has $L$-Lipschitz gradient $\nabla f$.	
Let $\mathcal{E}^t$ denote the event $\|\nabla f(x^t)\|_{\infty} \leq \eta$.
If $x\in \R^n$ satisfies $\frac{L\alpha}{2} \|\nabla f(x)\|_1 <  \|\nabla f(x)\|_2^2$,
then the iterate $x^{t+1}$ of MGD satisfies 
\begin{equation} \label{decrease-eq}
\mathbb{E}\left[f(x^{t+1}) \thinspace | \thinspace x^t=x, \mathcal{E}^t \right] < f(x).
\end{equation}
In particular, if $x\in \R^n$ satisfies $\|\nabla f(x)\|_2 > \frac{L\alpha \sqrt{n}}{2}$, then \eqref{decrease-eq} holds.
\end{corollary}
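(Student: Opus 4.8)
The plan is to read off the bound from Corollary \ref{First_non_stoch_theorem} and then isolate exactly when the right-hand side drops strictly below $f(x)$. Setting $x^t = x$ in Corollary \ref{First_non_stoch_theorem} gives
\[
\mathbb{E}\left[f(x^{t+1}) \thinspace | \thinspace x^t = x, \mathcal{E}^t \right] \leq f(x) + \frac{L\alpha^2}{2\eta}\|\nabla f(x)\|_1 - \frac{\alpha}{\eta}\|\nabla f(x)\|_2^2 = f(x) + \frac{\alpha}{\eta}\left(\frac{L\alpha}{2}\|\nabla f(x)\|_1 - \|\nabla f(x)\|_2^2\right).
\]
Since $\alpha > 0$ and $\eta > 0$, the parenthetical correction term is strictly negative precisely when $\frac{L\alpha}{2}\|\nabla f(x)\|_1 < \|\nabla f(x)\|_2^2$, which is the hypothesis; this immediately yields \eqref{decrease-eq}.

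For the ``in particular'' clause, I would pass from the $\ell_2$ condition to the $\ell_1$ condition via the standard inequality $\|v\|_1 \leq \sqrt{n}\,\|v\|_2$ on $\R^n$ (Cauchy--Schwarz applied to $v$ and the all-ones vector). Assuming $\|\nabla f(x)\|_2 > \frac{L\alpha\sqrt{n}}{2}$, multiply both sides by $\|\nabla f(x)\|_2 > 0$ to get $\|\nabla f(x)\|_2^2 > \frac{L\alpha\sqrt{n}}{2}\|\nabla f(x)\|_2 \geq \frac{L\alpha}{2}\|\nabla f(x)\|_1$, so the first hypothesis holds and the previous paragraph applies.

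There is no real obstacle here: the statement is a direct corollary, and the only points requiring a moment's care are noting the positivity of $\alpha$ and $\eta$ (so the factor $\alpha/\eta$ does not flip the inequality) and recording that $\|\nabla f(x)\|_2 > 0$ is automatic from $\|\nabla f(x)\|_2 > \tfrac{L\alpha\sqrt n}{2} \geq 0$, which is what licenses multiplying through by $\|\nabla f(x)\|_2$ in the second step.
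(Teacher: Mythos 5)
Your proof is correct and follows the same route as the paper: the first claim is read off directly from Corollary \ref{First_non_stoch_theorem}, and the ``in particular'' clause is reduced to it via the Cauchy--Schwarz bound $\|\nabla f(x)\|_1 \leq \sqrt{n}\,\|\nabla f(x)\|_2$, exactly as in the paper's proof. Your version simply spells out the first step (which the paper leaves implicit) in slightly more detail.
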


\begin{proof}
It suffices to note that if $\|\nabla f(x)\|_2 > \frac{L\alpha \sqrt{n}}{2}$, then the Cauchy-Schwarz inequality implies 
$$\frac{L\alpha}{2} \|\nabla f(x)\|_1 \leq \frac{L\alpha\sqrt{n}}{2} \|\nabla f(x)\|_2 < \|\nabla f(x)\|^2_2.$$
\end{proof}

The next result gives conditions for expected decrease of the cost function under the assumption of strong convexity. 
\begin{corollary} \label{nonstoch-cor2}
Suppose the cost function $f:\mathbb{R}^n\rightarrow \mathbb{R}$ is $\mu-$strongly convex and has $L-$Lipschitz gradient $\nabla f$. 
Let $x^*$ denote the unique minimizer of $f$. Given a tolerance level $\varepsilon >0$, suppose that
\begin{align}
\alpha < \left( \frac{4\varepsilon \mu}{L^2 n}\right)^{1/2}. \label{nonstoch-cor2-eq1}
\end{align}
Let $\mathcal{E}^t$ denote the event $\|\nabla f(x^t)\|_{\infty} \leq \eta$.
If $x\in \R^n$ satisfies $f(x)-f(x^*)>\varepsilon$, then the iterate $x^{t+1}$ of MGD satisfies 
$$\mathbb{E}\left[f(x^{t+1}) \thinspace | \thinspace x^t=x, \mathcal{E}^t \right]<f(x).$$
\end{corollary}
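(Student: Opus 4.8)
The plan is to deduce Corollary \ref{nonstoch-cor2} from the second part of Corollary \ref{nonstoch-cor1}, by showing that the stated hypotheses force $\|\nabla f(x)\|_2 > \frac{L\alpha\sqrt{n}}{2}$. The bridge between the suboptimality gap $f(x)-f(x^*)$ and the gradient norm is a Polyak--{\L}ojasiewicz-type bound, which I would establish first: for every $x \in \R^n$,
\begin{equation*}
f(x) - f(x^*) \le \frac{1}{\mu}\,\|\nabla f(x)\|_2^2 .
\end{equation*}
This can be derived entirely from material already available. Since $f$ is strongly convex it is in particular convex, so $f(x) - f(x^*) \le \langle \nabla f(x), x - x^*\rangle$; the Cauchy--Schwarz inequality bounds the right-hand side by $\|\nabla f(x)\|_2\,\|x-x^*\|_2$; and since $\nabla f(x^*) = 0$, inequality \eqref{conv_implication} applied with $y = x^*$ gives $\|x-x^*\|_2 \le \frac{1}{\mu}\|\nabla f(x)\|_2$. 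Chaining these three estimates yields the displayed bound.

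Next I would combine this with the hypothesis \eqref{nonstoch-cor2-eq1}. Squaring \eqref{nonstoch-cor2-eq1} and rearranging gives $\left(\frac{L\alpha\sqrt{n}}{2}\right)^2 = \frac{L^2\alpha^2 n}{4} < \varepsilon\mu$. On the other hand, if $x$ satisfies $f(x) - f(x^*) > \varepsilon$, then the PL-type bound above gives $\|\nabla f(x)\|_2^2 > \mu\varepsilon$. Comparing the two inequalities yields $\|\nabla f(x)\|_2 > \frac{L\alpha\sqrt{n}}{2}$, which is precisely the sufficient condition in the second sentence of Corollary \ref{nonstoch-cor1}. Since the event $\mathcal{E}^t$ in Corollary \ref{nonstoch-cor2} is the same event $\|\nabla f(x^t)\|_\infty \le \eta$ as in Corollary \ref{nonstoch-cor1}, no translation of conditioning events is needed, and applying that corollary with $x^t = x$ gives $\mathbb{E}\left[f(x^{t+1}) \thinspace | \thinspace x^t = x, \mathcal{E}^t\right] < f(x)$.

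The only real content is the gradient-norm lower bound in the first step; the rest is bookkeeping with constants. The point that needs a moment of care is that the constant $4$ in \eqref{nonstoch-cor2-eq1} is calibrated to the factor-of-two-loose PL constant $\mu$ (rather than the sharp constant $2\mu$), which is exactly what the convexity/Cauchy--Schwarz derivation produces; indeed $\mu\varepsilon > \left(\frac{L\alpha\sqrt n}{2}\right)^2$ is precisely the content of the hypothesis on $\alpha$, with a little room to spare, so there is no tension. I do not anticipate a genuine obstacle here — the main task is simply to record the PL-type inequality cleanly and verify the constants line up.
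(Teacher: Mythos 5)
Your proof is correct, but it takes a genuinely different route from the paper's. Both arguments reduce the corollary to the sufficient condition $\|\nabla f(x)\|_2 > \frac{L\alpha\sqrt{n}}{2}$ from Corollary \ref{nonstoch-cor1}; the difference is in how that gradient lower bound is extracted from $f(x)-f(x^*)>\varepsilon$. The paper splits into two cases according to whether $\|x-x^*\|_2$ exceeds $\frac{L\alpha\sqrt{n}}{2\mu}$: in the far case it applies \eqref{conv_implication} directly, and in the near case it restricts $f$ to the line through $x$ and $x^*$ and uses a secant-slope argument to get $\|\nabla f(x)\|_2 \geq \frac{f(x)-f(x^*)}{\|x-x^*\|_2} > \frac{2\mu\varepsilon}{L\alpha\sqrt{n}}$. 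You instead prove the single global inequality $f(x)-f(x^*) \le \frac{1}{\mu}\|\nabla f(x)\|_2^2$ by chaining convexity, Cauchy--Schwarz, and \eqref{conv_implication}, which eliminates the case split entirely and makes the role of the hypothesis $\mu\varepsilon > \frac{L^2\alpha^2 n}{4}$ transparent. Your observation about the constant is on target: the factor $4$ in \eqref{nonstoch-cor2-eq1} matches the loose Polyak--{\L}ojasiewicz constant $\mu$ (rather than the sharp $2\mu$), so the constants close with no slack needed. Your version is arguably cleaner; it also sidesteps a small blemish in the paper's Case 2, where the final line records the bound as $\|\nabla f(x)\|_2 > \frac{L\alpha\sqrt{n}}{4}$ even though the displayed computation yields $\frac{L\alpha\sqrt{n}}{2}$, which is what is actually required to invoke Corollary \ref{nonstoch-cor1}.
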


\begin{proof}
Assume that $x\in \R^n$ satisfies $f(x)-f(x^*)>\varepsilon$.
It suffices to prove that $\|\nabla f(x)\|_2 > \frac{L\alpha \sqrt{n}}{2}$, since the result then follows from Corollary \ref{nonstoch-cor1}.  
We consider two cases depending on whether $\| x - x^*\|$ is large or small.\\
				
\noindent {\em Case 1.}  Suppose that $\|x-x^*\|_2 >\frac{L\alpha \sqrt{n}}{2\mu }$. Applying $\eqref{conv_implication}$ and $\nabla f(x^*)=0$ yields
$$\|\nabla f(x)\|_2 = \| \nabla f(x) - \nabla f(x^*)\|_2 \geq \mu \| x - x^*\|_2 > \frac{L\alpha \sqrt{n}}{2}.$$
		
\noindent {\em Case 2.} Suppose that $\|x-x^*\|_2 \leq \frac{L\alpha \sqrt{n}}{2\mu}$. Define the function $g(r)=f(x^*+ r \frac{x-x^*}{\|x-x^*\|})$, the restriction of $f$ to the line containing both $x^t$ and $x^*$. Observe that $g$ is a strictly convex function of the single variable $r$ with unique minimizer at $r=0$. Moreover, observe that $g'(r)$ is the directional derivative of $f$ at the point $x^*+ru$ in the direction $u=\frac{x-x^*}{\|x-x^*\|_2}$. Because $g$ is convex, we know that this directional derivative is larger than the slope of the secant line of $g$ between $0$ and $r$. Thus, using the Cauchy-Schwarz inequality, we have
		\begin{align}
		\|\nabla f(x)\|_2 \geq \langle \nabla f(x),u\rangle = D_uf(x) > \frac{f(x)-f(x^*)}{\|x-x^*\|}>\frac{2\mu\varepsilon}{L\alpha \sqrt{n}}. \label{conv-cond-pf-eq1}
		\end{align}
		Rewriting \eqref{nonstoch-cor2-eq1} in terms of $\varepsilon$ gives
		\begin{align}
		\varepsilon > \frac{L^2 \alpha^2 n }{4\mu}. \label{conv-cond-pf-eq2}
		\end{align} 
		Combining \eqref{conv-cond-pf-eq1} and  \eqref{conv-cond-pf-eq2} gives $\|\nabla f(x)\|_2 > \frac{L\alpha \sqrt{n}}{4}$.

\end{proof}

The remainder of this section address rates of convergence for MGD.  The next result is a corollary of Theorem \ref{iterate_convergence_thm}, and holds since
$\nabla f(x^*)=0$.
\begin{corollary} \label{MGD_iterate}
	Suppose the cost function $f:\mathbb{R}^n \rightarrow \mathbb{R}$ is $\mu-$strongly convex and has $L-$Lipschitz gradient $\nabla f$.
	Let $\mathcal{E}^t$ denote the event $\|\nabla f(x^t)\|_{\infty} \leq \eta$.
	Let $x^*$ denote the unique minimizer of $f$. 
	The iterate $x^{t+1}$ of MGD satisfies
	\begin{align}
	\mathbb{E}\left[\|x^{t+1}-x^*\|_2^2 \thinspace | \thinspace x^t, \mathcal{E}^t \right]\leq \left(1-\frac{2\alpha \mu}{\eta}\right) \|x^t-x^*\|_2^2 + \frac{L\alpha^2 \sqrt{n}}{\eta}\|x^t-x^*\|_2.
	\end{align}
\end{corollary}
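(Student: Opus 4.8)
The plan is to invoke Theorem \ref{iterate_convergence_thm} directly, specialized to the deterministic gradient estimator $G = \nabla f$, which is exactly the MGD case of SMGD. First I would check that all hypotheses of Theorem \ref{iterate_convergence_thm} are in force: $f$ is assumed $\mu$-strongly convex with $L$-Lipschitz gradient, and taking $G^t = \nabla f$ for every $t$ produces a family $\{G^t\}$ that is (trivially) i.i.d.\ and unbiased for $\nabla f$, and that is $L$-Lipschitz continuous almost surely, since $\nabla f$ itself is $L$-Lipschitz. One also notes that the event $\mathcal{E}^t = \{\|G^t(x^t)\|_\infty \le \eta\}$ appearing in Theorem \ref{iterate_convergence_thm} becomes precisely $\{\|\nabla f(x^t)\|_\infty \le \eta\}$, matching the definition of $\mathcal{E}^t$ in the corollary.

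With these hypotheses verified, Theorem \ref{iterate_convergence_thm} yields
\begin{align*}
\mathbb{E}\left[\|x^{t+1}-x^*\|_2^2 \thinspace | \thinspace x^t, \mathcal{E}^t \right] \leq \left(1-\frac{2\alpha \mu}{\eta}\right)\|x^t-x^*\|_2^2 + \frac{L\alpha^2 \sqrt{n}}{\eta} \|x^t-x^*\|_2 + \frac{\alpha^2}{\eta}\mathbb{E}\left[\| \nabla f(x^*)\|_1 \thinspace | \thinspace x^t, \mathcal{E}^t \right].
\end{align*}
The only remaining step is to observe that the last term vanishes: since $f$ is differentiable and strongly convex it attains its minimum at the unique point $x^*$, so $\nabla f(x^*) = 0$, hence $\|\nabla f(x^*)\|_1 = 0$ and its conditional expectation is $0$. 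Dropping that term produces exactly the asserted inequality.

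I do not expect a genuine obstacle here; the entire mathematical content is carried by Theorem \ref{iterate_convergence_thm}, and the corollary is simply the remark that the estimator-dependent error term $\tfrac{\alpha^2}{\eta}\mathbb{E}[\|G^t(x^*)\|_1\mid x^t,\mathcal{E}^t]$ in \eqref{iter-conv-thm-eq} collapses to zero when the gradient is known exactly. The only items worth being careful about are the bookkeeping points above: confirming that the i.i.d.\ and almost-sure-Lipschitz hypotheses on $G$ are met by the constant estimator $\nabla f$, and that the two descriptions of the event $\mathcal{E}^t$ agree in the MGD setting.
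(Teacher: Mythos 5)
Your proposal is correct and matches the paper's own (very brief) justification exactly: the paper likewise obtains Corollary \ref{MGD_iterate} by specializing Theorem \ref{iterate_convergence_thm} to $G=\nabla f$ and noting that the term $\frac{\alpha^2}{\eta}\mathbb{E}[\|G^t(x^*)\|_1 \mid x^t,\mathcal{E}^t]$ vanishes because $\nabla f(x^*)=0$. Your additional bookkeeping (checking the i.i.d., unbiasedness, and almost-sure Lipschitz hypotheses, and the agreement of the two descriptions of $\mathcal{E}^t$) is more explicit than the paper but does not change the route.
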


Corollary \ref{MGD_iterate} can be used to provide conditions under which the error $\|x^{t+1} - x^*\|$ for MGD decreases in expectation.
\begin{corollary}\label{Nice_iterate_form}
Suppose the cost function $f:\mathbb{R}^n \rightarrow \mathbb{R}$ is $\mu-$strongly convex and has $L-$Lipschitz gradient $\nabla f$.
Let $\mathcal{E}^t$ denote the event $\|\nabla f(x^t)\|_{\infty} \leq \eta$.
Let $x^*$ denote the unique minimizer of $f$. 
If $x\in \R^n$ satisfies $\|x-x^*\|_2 > \frac{L\alpha \sqrt{n}}{2\mu}$, then the iterate $x^{t+1}$ of MGD satisfies 
	\begin{align*}
	\mathbb{E}\left[\|x^{t+1}-x^*\|^2_2 \thinspace | \thinspace x^t=x, \mathcal{E}^t \right] < \|x-x^*\|^2_2.
	\end{align*}
\end{corollary}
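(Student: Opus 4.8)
The plan is to invoke Corollary \ref{MGD_iterate} directly and then reduce the desired strict inequality to the hypothesis by elementary algebra. Write $r = \|x - x^*\|_2$ and note that the hypothesis $r > \frac{L\alpha\sqrt{n}}{2\mu}$ forces $r > 0$, so $x \neq x^*$; this positivity is the only point that needs a moment's care.

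First I would apply Corollary \ref{MGD_iterate} with $x^t = x$ to obtain
\begin{equation*}
\mathbb{E}\left[\|x^{t+1}-x^*\|_2^2 \thinspace | \thinspace x^t=x, \mathcal{E}^t \right]\leq \left(1-\frac{2\alpha \mu}{\eta}\right) r^2 + \frac{L\alpha^2 \sqrt{n}}{\eta}\, r.
\end{equation*}
It then suffices to show that the right-hand side is strictly smaller than $r^2$. Subtracting $r^2$ from both sides, this is equivalent to $\frac{L\alpha^2\sqrt{n}}{\eta}\, r - \frac{2\alpha\mu}{\eta}\, r^2 < 0$, and since $\frac{\alpha r}{\eta} > 0$ (using $\alpha > 0$, $\eta > 0$, and $r > 0$), we may divide through by $\frac{\alpha r}{\eta}$ to see that this is in turn equivalent to $L\alpha\sqrt{n} < 2\mu r$, i.e. to $r > \frac{L\alpha\sqrt{n}}{2\mu}$.

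Since $r > \frac{L\alpha\sqrt{n}}{2\mu}$ is precisely the hypothesis, the chain of equivalences yields $\mathbb{E}\left[\|x^{t+1}-x^*\|_2^2 \thinspace | \thinspace x^t=x, \mathcal{E}^t \right] < r^2 = \|x-x^*\|_2^2$, as claimed. There is no genuine obstacle here: the entire content is the rearrangement above, and the sole thing to be vigilant about is that dividing by $\frac{\alpha r}{\eta}$ is legitimate, which is guaranteed because the hypothesis rules out $x = x^*$.
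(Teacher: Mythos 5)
Your proposal is correct and follows essentially the same route as the paper: both invoke Corollary \ref{MGD_iterate} and reduce the strict inequality to the hypothesis $\|x-x^*\|_2 > \frac{L\alpha\sqrt{n}}{2\mu}$ by elementary rearrangement. Your explicit attention to $r>0$ before dividing is a minor point the paper leaves implicit, but the argument is identical in substance.
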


\begin{proof}
	By Corollary \ref{MGD_iterate}, we have
	\begin{align*}
	\mathbb{E}\left[\|x^{t+1}-x^*\|_2^2 \thinspace | \thinspace x^t =x, \mathcal{E}^t \right]
	\leq \|x-x^*\|_2^2 \left(1-\frac{2\alpha \mu}{\eta} + \frac{L\alpha^2 \sqrt{n}}{\eta \|x-x^*\|_2}\right).
	\end{align*}	
	In particular, $\mathbb{E}\left[\|x^{t+1}-x^*\|^2_2 \thinspace | \thinspace  \thinspace x^t=x, \mathcal{E}^t  \right] < \|x-x^*\|^2_2$ holds whenever
	\begin{align}
	\frac{2\alpha \mu}{\eta} > \frac{L\alpha^2 \sqrt{n}}{\eta \|x-x^*\|_2}. \label{cor-cor-eq}
	\end{align}
	Since \eqref{cor-cor-eq} is equivalent to $\|x-x^*\|_2 > \frac{L\alpha \sqrt{n}}{2\mu}$, this completes the proof.
\end{proof}

The following example shows that the conditions $\|\nabla f(x^t)\|>\frac{L\alpha\sqrt{n}}{2}$ and $\|x^t-x^*\|_2 >\frac{L\alpha\sqrt{n}}{2}$ in Corollaries \ref{nonstoch-cor1} and
 \ref{Nice_iterate_form} cannot be weakened.

\begin{example} \label{tight_example}
	Fix a lattice $\alpha \mathbb{Z}^n$. Define the function $f:\mathbb{R}^n\rightarrow \mathbb{R}$ by $f(x_1,\dots , x_n)= \sum_{i=1}^n (x_i-\frac{\alpha}{2})^2$. 
	The unique minimizer of $f$ is $x^*=(\frac{\alpha}{2},\dots, \frac{\alpha}{2})$. 
	Since $\nabla f (x_1,\dots x_n) = (2x_1-\alpha, \cdots, 2x_n - \alpha)$, it follows that $\nabla f$ is $2-$Lipschitz and $f$ is $2-$strongly convex.

	Define $\mathcal{S} = \{ (x_1, \cdots, x_n) \in \alpha \mathbb{Z}^n:  \hbox{ each } x_i \in \{ 0, \alpha\} \}$. Note if $x \in \mathcal{S}$ then $f(x)=\frac{n\alpha^2}{4}$. 
	Further note that if $x^t=0$, then the next iterate of Markov gradient descent satisfies $x^{t+1} \in \mathcal{S}$ because $\frac{\partial f}{\partial x_i}(0)<0$ for all $i$.  
	Therefore, $\mathbb{E}\left[\|x^{t+1}-x^*\|^2_2 \thinspace | \thinspace x^t=0 \right]=\|x^t-x^*\|^2_2$ and 
 $\mathbb{E}\left[f(x^{t+1})-f(x^t) \thinspace | \thinspace x^t=0\right]=0$.  
 This shows that the conclusions of Corollaries \ref{nonstoch-cor1} and \ref{Nice_iterate_form} do not hold.
 However, observe that 
 \begin{align*}
\|\nabla f(0)\|_2 = \left(\sum_{i=1}^n (-\alpha)^2\right)^{1/2} = \alpha \sqrt{n} =\frac{L\alpha \sqrt{n}}{2}
\end{align*}
and
\begin{align*}
\|x^t-x^*\|_2 = \left(\sum_{i=1}^n \frac{\alpha}{2}^2\right)^{1/2} = \frac{\alpha \sqrt{n}}{2}=\frac{L\alpha \sqrt{n}}{2\mu}.
\end{align*}
In particular, the conditions $\|\nabla f(x^t)\|>\frac{L\alpha\sqrt{n}}{2}$ and $\|x^t-x^*\|_2 >\frac{L\alpha\sqrt{n}}{2}$ in Corollaries \ref{nonstoch-cor1} and
 \ref{Nice_iterate_form} are tight.
\end{example}

\section{Experiments and Numerical Validation}\label{numerical_section}

In this section we validate the use of SMGD for training quantized neural networks with three experiments. First, we demonstrate the accuracy of SMGD-trained networks on the standard MNIST and CIFAR-10 datasets. 
Second, we compare SMGD to SGD while holding the amount of memory constant during training. 
Finally, we show the effect that the quality of gradient estimators has on SMGD training by altering minibatch sizes.

\subsection{Performance of SMGD on MNIST and CIFAR-10}
 Our first experiment uses SMGD to train quantized networks with identical architectures as in \cite{Courbariaux}. These experiments validate that SMGD can perform well on some data sets but may not be optimal in other settings. 

We compare 1-bit and 4-bit versions of SMGD for neural network quantization to the performance of the 1-bit BinaryConnect method \cite{Courbariaux} on the MNIST and CIFAR10 datasets.  For the MNIST dataset, we use a feed-forward neural network with 3 hidden layers of 4096 neurons. We use no preprocessing, the ReLU non-linearity, and the softmax output layer. We note that BinaryConnect uses an L2-SVM output layer, batch normalization, and dropout to improve performance while we omit these because the effects of these techniques are not included in our theoretical results. Including these techniques would likely further improve the competitiveness of SMGD.  The first column in Table \ref{table-testerr} shows the test errors for the MNIST dataset.  It is important to emphasize that since SMGD is memory-constrained during training, it is expected that BinaryConnect will outperform SMGD, but the performance of SMGD becomes competitive when more bits are allowed.

On the CIFAR-10 dataset, we use a convolutional architecture which is identical to that in \cite{JMLRPaper}. We observe that while SMGD can perform well on MNIST, it struggles on CIFRAR-10. This could be improved by incorporating advanced techniques such as dropout and SVM output during training, but we suspect that SMGD generally performs worse than other quantization algorithms in this setting.  In particular, we failed to find a good parameter configuration of $\alpha, \eta$ to successfully train a $1$-bit SMGD network on CIFAR-$10$. However, we again emphasize that SMGD weights are quantized during training so a true apples-to-apples comparison does not highlight the usefulness of SMGD. The results of our first set of experiments are summarized in Table \ref{table-testerr}.

\begin{table} 
	\begin{center}
		\begin{tabular}{|c ||c |c|}
			\hline
			\textbf{\hspace{25mm}Method\hspace{25mm}} & \textbf{\hspace{8mm}MNIST\hspace{8mm}} & \textbf{\hspace{8mm}CIFAR-10\hspace{5mm}} \\
			\hline
			Binary Connect & 0.96 & 11.4\\
			SMGD (4-bit) & 1.59 & 27 \\
			SMGD (1-bit) & 6.97 & - \\
			\hline
		\end{tabular}	
		\caption{Test errors of SMGD versus BinaryConnect on MNIST and CIFAR-10.}
		\label{table-testerr}		
	\end{center}
\end{table}

\subsection{Performance of SMGD: memory utilization during training}
Our second experiment highlights the motivation for using SMGD: the network is compressed during training as well as at run time. This is in contrast to the existing techniques that we are aware of which require full precision during training.  Moreover, many other neural network quantization techniques, e.g., \cite{Courbariaux}, require more memory during training than a full precision network trained with SGD. To study this issue, we compare a quantized network trained with SMGD and a full precision network trained with SGD where the memory during training is held approximately constant.

Training a network requires the storage of the weights and intermediate neural outputs as well as computation and storage of partial derivatives. The weights and partial derivatives take up an overwhelming amount of this memory, so let us compute how much savings SMGD provides in this area. SMGD requires $q$ bits per weight and $2$ bits to store each partial derivative after quantization. Computing the partial derivatives takes an additional $32$ bits per weight when we use mini-batches as we must aggregate the full-precision gradient over many input signals before quantization. However, in the online setting where we process only one image at a time, we can compute the partial derivatives one-by-one. So, in the setting without mini batches we require only $2+q$ bits-per-weight to train our network. When we use mini batches this number is $32+q$.

 Full-precision networks, on the other hand, require full-precision for weights and partial derivatives leading to $64$ bits-per-weight. We recall that other quantization methods typically require more memory because they store both auxilliary and quantized weights. Therefore, other methods generally require at least $\frac{64}{2+q}$ times more memory during online training than an SMGD network. Therefore, for a fixed amount of memory, one can use a network that is approximately $\frac{64}{2+q}$ times larger than the full-precision networks which allows for better accuracy in a memory-constrained environment.

The details of our second experiment are as follows. First, we trained a full-precision neural network with a batch size of $1$ on the MNIST data set to determine a baseline performance. Then, we compute the size of the SMGD-trained network that requires the same amount of memory and train that network for the same number of epochs as the full-precision network. The results of these experiments for $q=4,5,6$ bit quantization are shown in Figure \ref{fig:online_comp}.  While not included in the figure, the result for $q=3$ bits is still favorable, but   the results degrade for $q=2$ and $q=1$ bit networks on these small architectures.  

\begin{figure}
	\centering
	\includegraphics[width=\linewidth]{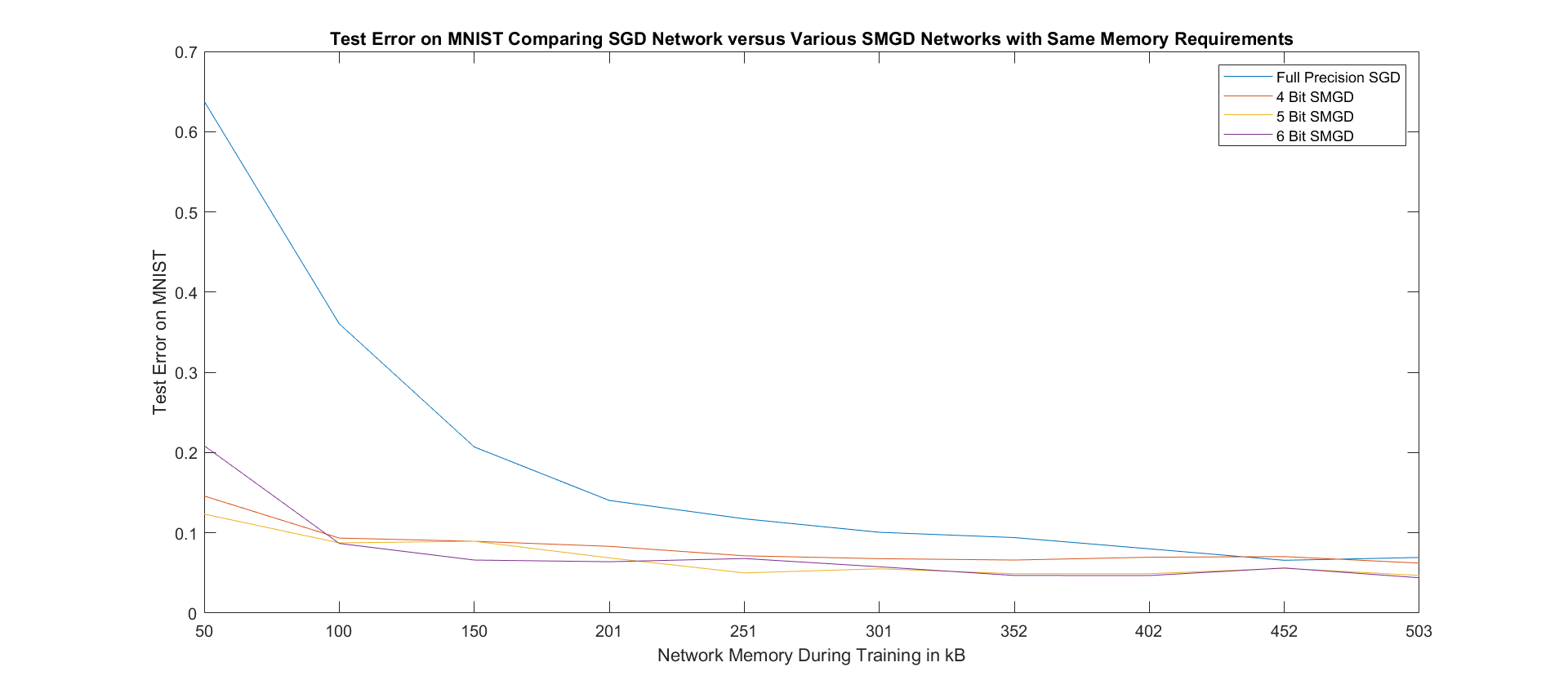}
	\caption{Comparison of test accuracy of various training methods each using approximately the same memory to store the weights.}
	\label{fig:online_comp}
\end{figure}

The motivation for SMGD is consistent with the fact that training neural networks is not a {one size fits all} problem.  The choice of training method should be dependent on the setting in which the learning occurs. We offer that SMGD may be best implemented in the `memory-contrained during training' environment while other quantization methods are better in other constrained settings. 
Table \ref{table-constraints} itemizes some recommendations regarding best training practices under various constraints on the resulting network and training process.

\begin{table}
	\begin{center}
		\begin{tabular}{|c |c|} 
			\hline
			\textbf{\hspace{25mm}Constraints \hspace{25mm}} & \textbf{\hspace{8mm} Methods\hspace{8mm}} \\
			\hline
			\hline
			None &  SGD, AdaGrad \cite{adagrad}, Adam \cite{adam} \\ \hline
			Unconstrained during training; memory constrained at run-time & BinaryConnect \cite{Courbariaux}, QNN \cite{JMLRPaper} \\ \hline
			Time constrained during both run and test-time & XNOR \cite{xnor}, QNN \cite{JMLRPaper} \\ \hline
			Memory constrained during training & SMGD \\			\hline
		\end{tabular}
		\caption{Network training methods that are suitable under different constraints.}
		\label{table-constraints}	
	\end{center}
\end{table}

\subsection{Effect of minibatch size on SMGD}
Our final experiment highlights the effect of increased minibatch size and illustrates the improvements suggested by Theorem \ref{minibatch_result} together with
Theorem \ref{first_stoch_theorem}. 
We trained a network using SMGD and with increasing mini-batch sizes. The experiment illustrates that as mini-batch size increases SMGD achieves better training error until it {saturates}. 
Moreover, we see that while increasing the mini-batch size improves the performance of SMGD, there are diminishing returns as the batch size grows. 
The results of this experiment are contained in Figure \ref{fig:effectofvariance}.  

\begin{figure}
	\centering
	\includegraphics[width=\linewidth]{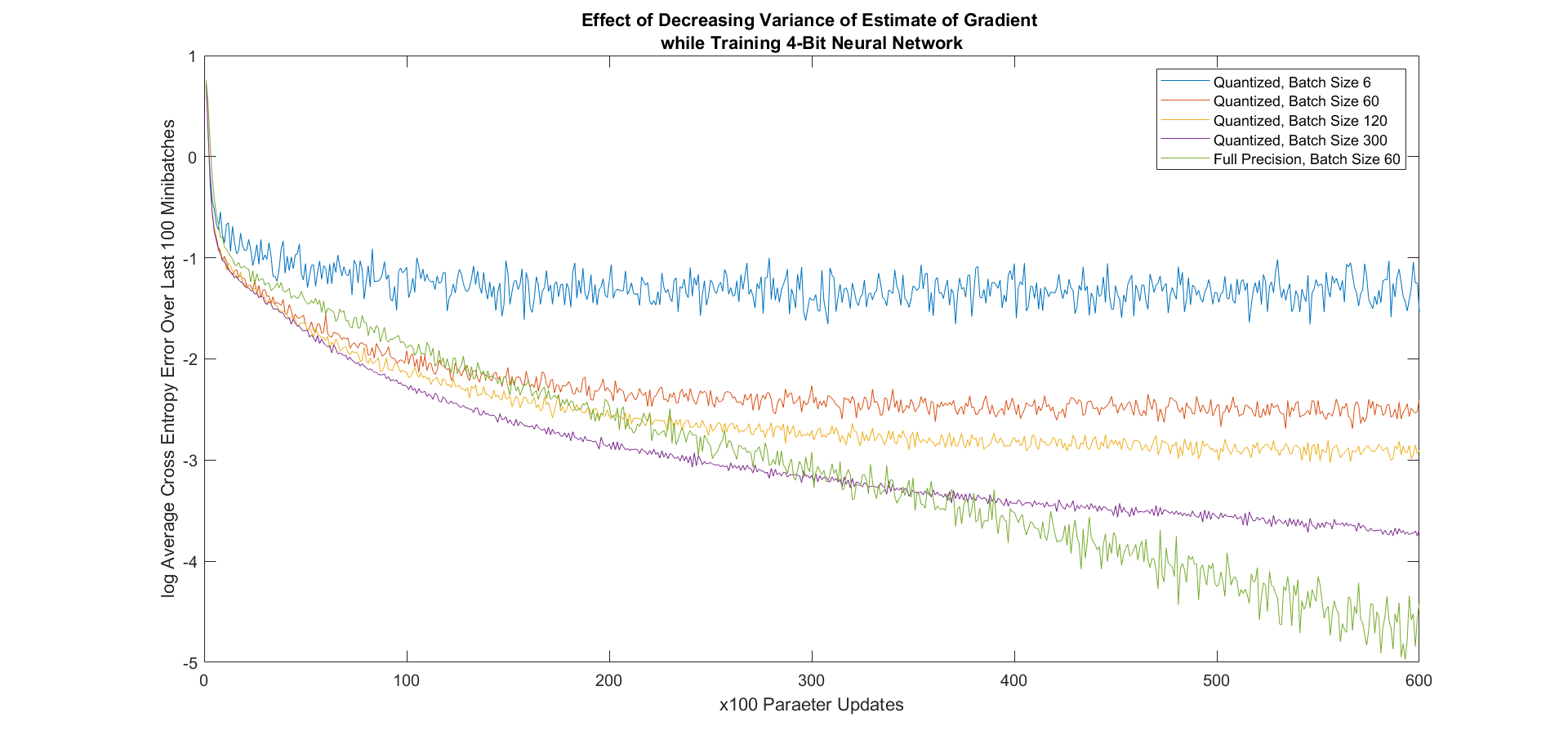}
	\caption{Training errors for various batch sizes trained with SMGD on identical network architectures}
	\label{fig:effectofvariance}
\end{figure}

\section{Conclusion}
This paper introduces stochastic Markov gradient descent (SMGD), a method for training low-bit neural networks in the setting where memory is constrained during training. 
We established theoretical guarantees for SMGD and have shown its viability through numerical experiments.  Open directions of work include extending SMGD to conjugate gradient methods, incorporating more advanced ideas such as batch normalization and adaptive gradients, and relaxing our algorithm to allow for finer quantization of the gradient.


\section*{Acknowledgments}
The authors thank Weilin Li, Gal Mishne, Jeff Sieracki, and Penghang Yin for helpful conversations.
A.~Powell was supported in part by NSF DMS Grant 1521749, and gratefully acknowledges the Academia Sinica Institute of Mathematics (Taipei, Taiwan) for its hospitality and support.


\bibliographystyle{siam}

\end{document}